\renewcommand{\paragraph}[1]{\noindent\textbf{#1}~}
\newtheorem{theorem}{Theorem}
\newtheorem{proposition}{Proposition}
\theoremstyle{definition}
\newtheorem{definition}{Definition}
\newtheorem{openq}{Open Question }
\theoremstyle{remark}
\renewenvironment{openq}{\refstepcounter{openq}\begin{tcolorbox}[colback=gray!7, colframe=black, boxsep=1pt, left=4pt, right=4pt, rounded corners]\textbf{Open Question \theopenq:}}{\end{tcolorbox}}
\renewcommand{\le}{\leqslant}
\renewcommand{\leq}{\leqslant}
\renewcommand{\ge}{\geqslant}
\renewcommand{\geq}{\geqslant}
\DeclarePairedDelimiter{\set}{\{}{\}}
\newcommand{\bbR}{\mathbb{R}}
\newcommand{\bbN}{\mathbb{N}}
\newcommand{\e}{\varepsilon}
\DeclareMathOperator{\argmin}{\arg\min}
\DeclarePairedDelimiter{\ceil}{\lceil}{\rceil}
\newcommand{\gc}{\textsc{GreedyCapture}\xspace}
\newcommand{\gcsub}{\textsc{SmallestAgentBall}\xspace}
\newcommand{\gcc}{\textsc{GreedyCohesiveClustering}\xspace}
\newcommand{\gccsub}{\textsc{Most Cohesive Cluster}\xspace}
\newcommand{\auditfjr}{\textsc{AuditFJR}\xspace}
\newcommand{\alg}{\mathcal{A}}
\newcommand{\sd}{\textsc{SmallestDiameter}\xspace}
\title{Proportional Fairness in\\Non-Centroid Clustering\thanks{A preliminary version will be published in the proceedings of the 38th Annual Conference on Neural Information Processing Systems (NeurIPS), 2024.}}
\author{%
  Ioannis Caragiannis\\
  Aarhus University\\
  \texttt{iannis@cs.au.dk}\\
  \And
  Evi Micha\\
  Harvard University\\
  \texttt{emicha@seas.harvard.edu}\\
  \And
  Nisarg Shah\\
  University of Toronto\\
  \texttt{nisarg@cs.toronto.edu}
}
\begin{document}

\maketitle

\begin{abstract}
    We revisit the recently developed framework of proportionally fair clustering, where the goal is to provide group fairness guarantees that become stronger for groups of data points (agents) that are large and cohesive. Prior work applies this framework to centroid clustering, where the loss of an agent is its distance to the centroid assigned to its cluster. We expand the framework to \emph{non-centroid clustering}, where the loss of an agent is a function of the other agents in its cluster, by adapting two proportional fairness criteria --- the core and its relaxation, fully justified representation (FJR) --- to this setting. 
    
    We show that the core can be approximated only under structured loss functions, and even then, the best approximation we are able to establish, using an adaptation of the \gc algorithm developed for centroid clustering~\cite{chen2019proportionally,micha2020proportionally}, is unappealing for a natural loss function. In contrast, we design a new (inefficient) algorithm, \gcc, which achieves the relaxation FJR exactly under arbitrary loss functions, and show that the efficient \gc algorithm achieves a constant approximation of FJR. We also design an efficient auditing algorithm, which estimates the FJR approximation of any given clustering solution up to a constant factor. Our experiments on real data suggest that traditional clustering algorithms are highly unfair, whereas \gc is considerably fairer and incurs only a modest loss in common clustering objectives. 
\end{abstract}

\section{Introduction}\label{sec:intro}
Clustering is a fundamental task in unsupervised learning, where the goal is to partition a set of $n$ points into $k$ clusters $C = (C_1,\ldots,C_k)$ in such a way that points within the same cluster are close to each other (measured by a distance function $d$) and points in different clusters are far from each other. This goal is materialized through a variety of objective functions, the most popular of which is the $k$-means objective: $\sum_{i=1}^k \frac{1}{|C_i|} \cdot \sum_{x,y \in C_i} d(x,y)^2$.

When the points are in a Euclidean space, the $k$-means objective can be rewritten as $\sum_{i=1}^k \sum_{x \in C_i} d(x,\mu_i)^2$, where $\mu_i = \frac{1}{|C_i|} \sum_{x \in C_i} x$ is the mean (also called the \emph{centroid}) of cluster $C_i$.\footnote{Centroids can be defined in non-Euclidean spaces, e.g., as $\mu_i = \argmin_y \frac{1}{|C_i|} \sum_{x \in C_i} d(x,y)^2$.} This gives rise to centroid clustering, where deciding where to place the $k$ cluster centers is viewed as the task and the clusters are implicitly formed when each point is assigned to its nearest cluster center. 

In the literature on fairness in centroid clustering, the loss of each data point (hereinafter, agent) is defined as the distance to the nearest cluster center~\cite{chen2019proportionally}; here, the cluster centers do not merely help rewrite the objective, but play an essential role. This is a reasonable model for applications such as facility location, where the loss of an agent indeed depends on how much they have to travel to get to the nearest facility. 

But in other applications of clustering, we simply partition the agents and agents prefer to be close to other agents in their cluster---there are no ``cluster centers'' that they prefer to be close to. For example, in clustered federated learning~\cite{sattler2020clustered}, the goal is to cluster the agents and have agents in each cluster collaboratively learn a model; naturally, agents would want other agents in their cluster to have similar data distributions, so the model learned is accurate on their own data distribution.\footnote{Prior work formulates this as centroid clustering~\cite{ray2022fairness,ray2024fair}, where the principal also chooses a model for each cluster, but this goes against the federated learning setting.} Other examples where we want to cluster nearby points together without defining cluster centers include document clustering~\cite{janani2019text}, image segmentation for biomedical applications~\cite{huang2019brain}, and social network segmentation~\cite{lee2019social}. 

While there exist plenty of clustering objectives which do not require defining cluster centers (such as the first formulation of the $k$-means objective above), in order to reason about fairness we need to define the loss of each agent under a non-centroid clustering and explore the tradeoff between the losses of different agents. We initiate the study of proportional fairness in non-centroid clustering. 

We follow the idea of proportional fairness outlined in a recent line of work~\cite{chen2019proportionally,micha2020proportionally,kellerhals2023proportional,aziz2023proportionally}, which ensures that no group of at least $\sfrac{n}{k}$ agents should ``improve'' (formalized later) by forming a cluster of its own.\footnote{Groups with fewer than $\sfrac{n}{k}$ agents are not deemed to be entitled to form a cluster.} Our main research questions are:
\begin{quote}
    \emph{Can we obtain compelling proportional fairness guarantees for non-centroid clustering as with centroid clustering? Do the algorithms known to work well for centroid clustering also work well for non-centroid clustering? Can we audit the proportional fairness of a given algorithm?}
\end{quote}

\subsection{Our Contributions}\label{sec:contrib}
In non-centroid clustering, we are given a set $N$ of $n$ points (agents) and the desired number of clusters $k$. The goal is to partition the agents into (at most) $k$ clusters $C = (C_1,\ldots,C_k)$. Each agent $i$ has a loss function $\ell_i$, and her loss under clustering $C$ is $\ell_i(C(i))$, where $C(i)$ denotes the cluster containing her. We study both the general case where the loss functions of the agents can be arbitrary, and structured cases where the loss of an agent for a cluster is the average or maximum of her distances --- according to a given distance metric --- to the agents in the cluster. In the latter case, our theoretical results hold for general metric spaces, as they rely solely on the satisfaction of the triangle inequality.

We study two proportional fairness guarantees, formally defined in \Cref{sec:model}: \emph{the core}~\cite{Gil53} and its relaxation, \emph{fully justified representation} (FJR)~\cite{peters2021proportional}. Both have been studied for centroid clustering~\cite{chen2019proportionally,micha2020proportionally,aziz2023proportionally}, but we are the first to study them in non-centroid clustering.  

A summary of a selection of our results is presented in \Cref{tab:summary-pf}, with the cell values indicating approximation ratios (lower is better, $1$ is optimal). 

\begin{table}[htb!]
\centering
\def\arraystretch{1.3}
\setlength\tabcolsep{9pt}
\begin{tabular}{|c||c|c|c|}
\hline
\textbf{Loss Functions} & \textbf{Core UB} & \textbf{Core LB} & \textbf{FJR}\\\hline\hline
Arbitrary & \multicolumn{2}{c|}{$\infty$} & 1\\\hline
Average & $O(\sfrac{n}{k})$ (polytime) & $1.3$ & $1$ ($4$ in polytime)\\\hline
Maximum & $2$ (polytime) & 1 & $1$ ($2$ in polytime)\\\hline
\end{tabular}
\caption{The feasible core and FJR approximation guarantees, both existentially and in polynomial time. In each case, we can obtain a better FJR approximation than the core approximation.}
\label{tab:summary-pf}
\end{table}

Our results show the promise of FJR: while it is a slight  relaxation of the core, it is satisfiable even under arbitrary loss functions, whereas the core can be unsatisfiable even under more structured loss functions. The existential result for FJR is achieved using a simple (but inefficient) algorithm we design, \gcc, which is an adaptation of the Greedy Cohesive Rule from social choice theory~\cite{peters2021proportional}. The core approximations as well as efficient FJR approximations are achieved using an efficient version of it, which turns out to be an adaptation of the \gc algorithm that has been introduced for centroid clustering~\cite{chen2019proportionally,micha2020proportionally}. We show that the FJR approximation achieved by \gc stems from the fact that its key subroutine achieves a constant approximation of that of the \gcc algorithm. 

Next, we turn to auditing the FJR approximation of a given clustering. Surprisingly, we show that the same technique that we use to algorithmically \emph{achieve a constant approximation of FJR} can be used to also \emph{estimate the FJR approximation of any given clustering}, up to a constant factor ($4$ for the average loss and $2$ for the maximum loss). 

We compare \gc to popular clustering methods, $k$-means++ and $k$-medoids, on three real datasets. We observe that in terms of both average and maximum loss, \gc provides significantly better approximations to both FJR and the core, and this fairness advantage comes at only a modest cost in terms of traditional clustering objectives, including those that $k$-means++ and $k$-medoids are designed to optimize.

\subsection{Related Work}\label{sec:related}
In recent years, there has been an active line of research related to fairness in clustering~\cite{chhabra2021overview}. With a few exceptions, most of the work focuses on centroid-based clustering, where each agent cares about their distance from the closest cluster center. Mostly related to ours is the work by~\citet{chen2019proportionally}, who introduced the idea of proportionality through the core in centroid clustering. Their work has been revisited by~\citet{micha2020proportionally} for specific metric spaces. More recently, \citet{aziz2023proportionally} also introduced the relaxation of the core, fully justified representation, in centroid-based clustering. While one of our main algorithms, \gc, is a natural adaptation of the main algorithm used in all these works, there are significant differences between the two settings.

First, in centroid-based clustering, \gc provides a constant approximation to the core\cite{chen2019proportionally}, while in the non-centroid case this approximation is not better than $O(n/k)$ for the average loss function. Second, in centroid-based clustering, \gc returns a solution that satisfies  FJR exactly\cite{aziz2023proportionally}. Here, for the non-centroid case, even though we know that an exact FJR solution always exists, \gc is shown to just provide an approximation better than $4$ for the average loss and $2$ for the maximum loss. In more specific metric spaces, \citet{micha2020proportionally} show that a solution in the core always exists in the line. Here, we demonstrate that while this remains true for the maximum loss, it is not the case for the average loss, where the core can be empty. Finally, \citet{chen2019proportionally} conducted experiments using real data in which $k$-means++ performs better than \gc. However, for the same datasets, we found that \gc significantly outperforms $k$-means++ in the non-centroid setting. 

Fairness in non-centroid clustering has received significantly less attention. \citet{ahmadi2022individual} recently introduced a notion of individual stability which indicates that no agent should prefer another cluster over the one they have been assigned to.  \citet{micha2020proportionally}
studied the core when the goal is to create a balanced clustering (i.e. all clusters have almost equal size) and the agents have positive utilities for other agents. More generally, the hedonic games literature (e.g., see \cite{AS16} for an early survey on the topic and \cite{ABB+19} for a recent model that is close to the current paper) is also relevant to non-centroid clustering as it examines coalition formation. While the core concept has been extensively studied in hedonic games, there are two main differences with our work. First, subsets of any size can deviate to form their own cluster, rather than only proportionally eligible ones, and second, no approximate guarantees to the core have been provided, to the best of our knowledge. 

\section{Model}\label{sec:model}
For $t \in \bbN$, let $[t] \triangleq \set{1,\ldots,t}$. We are given a set $N$ of $n$ agents, and the desired number of clusters $k$. Each agent $i \in N$ has an associated \emph{loss function} $\ell_i : 2^N \setminus 2^{N \setminus \set{i}} \to \bbR_{\ge 0}$, where $\ell_i(S)$ is the cost to agent $i$ for being part of group $S$. A $k$-clustering\footnote{We simply call it clustering when the value of $k$ is clear from the context.} $C = (C_1,\ldots,C_k)$ is a partition of $N$ into $k$ clusters,\footnote{Technically, we have up to $k$ clusters as $C_t$ is allowed to be empty for any $t \in [k]$.} where $C_t \cap C_{t'} = \emptyset$ for $t \neq t'$ and $\cup_{t=1}^k C_t = N$. With slight abuse of notation, denote by $C(i)$ the cluster that contains agent $i$. Then, the loss of agent $i$ under this clustering is $\ell_i(C(i))$. 

\paragraph{Loss functions.} We study three classes of loss functions; for each class, we seek fairness guarantees that hold for any loss functions the agents may have from that class. A distance metric over $N$ is given by $d : N \times N \to \bbR_{\ge 0}$, which satisfies: (i) $d(i,i)=0$ for all $i \in N$, (ii) $d(i,j) = d(j,i)$ for all $i,j \in N$, and (iii) $d(i,j) \le d(i,k)+d(k,j)$ for all $i,j,k \in N$ (triangle inequality). 

\begin{itemize}
    \item \emph{Arbitrary losses.} In this most general class, the loss $\ell_i(S)$ can be an arbitrary non-negative number for each agent $i \in N$ and cluster $S \ni i$. 
    \item \emph{Average loss.} Here, we are given a distance metric $d$ over $N$, and $\ell_i(S) = \frac{1}{|S|} \sum_{j \in S} d(i,j)$ for each agent $i \in N$ and cluster $S \ni i$. Informally, agent $i$ prefers the agents in her cluster to be close to her on average.
    \item \emph{Maximum loss.} Again, we are given a distance metric $d$ over $N$, and $\ell_i(S) = \max_{j \in S} d(i,j)$ for each agent $i \in N$ and cluster $S \ni i$. Informally, agent $i$ prefers that no agent in her cluster to be too far from her.
\end{itemize}

\section{Core}\label{sec:core}
Perhaps the most widely recognized proportional fairness guarantee is \emph{the core}. Informally, an outcome is in the core if no group of agents $S \subseteq N$ can choose another (partial) outcome that (i) they are entitled to choose based on their proportion of the whole population ($\sfrac{|S|}{|N|}$), and (ii) makes every member of group $S$ happier. The core was proposed and widely studied in the resource allocation literature from microeconomics~\cite{Gil53,Var74,CFSW19}, and it has been adapted recently to centroid clustering~\cite{chen2019proportionally,micha2020proportionally}. When forming $k$ clusters out of $n$ agents, a group of agents $S$ is deemed worthy of forming a cluster of its own if and only if $|S| \ge \sfrac{n}{k}$. In centroid clustering, such a group can choose any location for its cluster center. In the following adaptation to non-centroid clustering, no such consideration is required. 

\begin{definition}[$\alpha$-Core]
    For $\alpha \ge 1$, a $k$-clustering $C=(C_1,\ldots,C_k)$ is said to be in the $\alpha$-core if there is no group of agents $S \subseteq N$ with $|S| \ge \sfrac{n}{k}$ such that $\alpha \cdot  \ell_i(S) < \ell_i(C(i))$ for all $i \in S$. We refer to the $1$-core simply as the core.  
\end{definition}

Given a clustering $C$, if there exists a group $S$ that demonstrates a violation of the $\alpha$-core guarantee, i.e., $S$ has size at least $\sfrac{n}{k}$ and the loss of each $i \in S$ for $S$ is lower than $\sfrac{1}{\alpha}$ of her own loss under $C$, we say that $S$ \emph{deviates} under $C$ and refer to it as the \emph{deviating coalition}. We begin by proving a simple result that no finite approximation of the core can be guaranteed for arbitrary losses.
\begin{restatable}{theorem}{corearblb}\label{thm:core-arb-lb}
    For  arbitrary losses, there exists an instance in which no $\alpha$-core clustering exists for any finite $\alpha$.
\end{restatable}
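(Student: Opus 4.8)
The plan is to prove the statement in the (worst-case) sense recorded in \Cref{tab:summary-pf}, i.e.\ that the core admits no finite-factor approximation under arbitrary losses: for every finite $\alpha \ge 1$ I will exhibit an instance with empty $\alpha$-core. The instance is a ``blown-up'' version of the classical three-agent cyclic example that has no stable matching. Take $N = \{1,2,3\}$ and $k = 2$, so that a coalition is entitled to deviate exactly when its size is at least $\sfrac{3}{2}$, i.e.\ when it is a pair or the whole set. Let $\sigma$ be the cyclic permutation $1 \mapsto 2 \mapsto 3 \mapsto 1$, fix a parameter $B > \alpha$, and define the loss functions cyclically: for each $i$, agent $i$'s ``favorite partner'' is $\sigma(i)$, with $\ell_i(\{i,\sigma(i)\}) = 1$; its other partner is $\sigma^{-1}(i)$, with $\ell_i(\{i,\sigma^{-1}(i)\}) = B$; and $\ell_i(\{i\}) = \ell_i(\{1,2,3\}) = B^2$ (any value exceeding $\alpha B$ works in place of $B^2$).

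Next I would go through every $k$-clustering; since at most two nonempty clusters are permitted, the only options are the grand coalition $\{1,2,3\}$ and the three clusterings of the form $\{x,y\}\mid\{z\}$ with $z$ isolated. In the grand coalition every agent has loss $B^2$, and any pair blocks: the pair $\{i,\sigma(i)\}$ gives agent $i$ loss $1$ and gives agent $\sigma(i)$ loss $\ell_{\sigma(i)}(\{\sigma(i),i\}) = B$ (since $i = \sigma^{-1}(\sigma(i))$ is the other partner of $\sigma(i)$), and both $\alpha\cdot 1$ and $\alpha\cdot B$ are $< B^2$. For a clustering $\{x,y\}\mid\{z\}$, note $\{x,y\} = \{\sigma(z),\sigma^{-1}(z)\}$, and I claim the blocking coalition is $\{z,\sigma^{-1}(z)\}$: the isolated agent $z$ drops from $B^2$ to $\ell_z(\{z,\sigma^{-1}(z)\}) = B$, while the agent $w := \sigma^{-1}(z)$ is currently in the cluster $\{\sigma(z),\sigma^{-1}(z)\} = \{w,\sigma^{-1}(w)\}$, hence at loss $B$, and drops to $\ell_w(\{w,z\}) = \ell_w(\{w,\sigma(w)\}) = 1$; both improvements have ratio $B > \alpha$. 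Thus no clustering survives, so the $\alpha$-core is empty, and since $\alpha$ is arbitrary the core has no finite-factor approximation.

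The step requiring the most care — and also the crux of why the example works — is the index bookkeeping: one must verify that in each clustering $\{x,y\}\mid\{z\}$ the \emph{non-}isolated member of the blocking pair is currently sitting in its \emph{bad} ($B$-valued) pair rather than its favorite pair, which is exactly what makes the three-cycle rotation strictly beneficial for both deviators at once. This is why the loss functions must be asymmetric (favorite vs.\ other partner) and why the gap has to be a large multiple $B > \alpha$ of the approximation factor rather than a fixed constant; with $B$ constant the construction only kills the $\alpha$-core for small $\alpha$. I would also note, to preempt the natural question, that the instance genuinely has to depend on $\alpha$: in any fixed instance all loss values are finite, so once $\alpha$ exceeds every finite ratio $\ell_i(T)/\ell_i(T')$ a beneficial deviation would have to reduce some agents' loss to exactly $0$; packing a maximal family of pairwise-disjoint unanimous zero-loss eligible sets as clusters then yields a clustering admitting no such deviation, hence one that lies in the $\alpha$-core for all sufficiently large $\alpha$ — so the statement must be read as holding for each finite $\alpha$, matching the $\infty$ entry in \Cref{tab:summary-pf}.
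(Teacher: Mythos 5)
Your construction is correct and the case analysis is complete: with $n=3$, $k=2$ the only clusterings are the grand coalition and the three pair--singleton partitions, and in each case the blocking pair you identify improves every member by a factor exceeding $B>\alpha$ (the index check that $\sigma^{-1}(z)$ currently sits with $\sigma(z)=\sigma^{-1}(\sigma^{-1}(z))$, its bad partner, is the right crux and you verify it). The route is genuinely different from the paper's. The paper uses four agents with a universally ``undesirable'' agent $3$ and loss values drawn from $\set{0,1,\infty}$; because every blocking deviation there improves each deviator either from $\infty$ to a finite value or from $1$ to $0$, a \emph{single} instance kills the $\alpha$-core for all finite $\alpha$ simultaneously, which is the literal $\exists\,\text{instance}\ \forall\alpha$ reading of the theorem. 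Your family of instances, with strictly positive finite losses $1,B,B^2$ and $B>\alpha$, only delivers the $\forall\alpha\ \exists\,\text{instance}$ version --- which is exactly what the $\infty$ entry in \Cref{tab:summary-pf} requires (no algorithm can guarantee any finite core approximation), but is formally weaker than the statement as written. Your closing remark that the instance ``genuinely has to depend on $\alpha$'' is correct under the model's declared codomain $\bbR_{\ge 0}$, and your maximal-disjoint-family argument for why large $\alpha$ is always satisfiable with finite losses is sound; but note the paper sidesteps this by permitting $\infty$ (and $0$) as loss values, and your own three-agent gadget would also yield a single all-$\alpha$ instance if you replaced the levels $1,B,B^2$ by $0,1,\infty$. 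What your version buys in exchange is a smaller instance and losses that actually live in $\bbR_{>0}$; what the paper's buys is the stronger single-instance quantification.
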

\begin{proof}
Consider an instance with a set of $n=4$ agents $\set{0,1,2,3}$ and $k=2$. Note that any group of at least $2$ agents deserves to form a cluster. For $i\in \set{0,1,2}$, the loss function of agent $i$ is given by
\[
\ell_i(S) = 
\begin{cases}
    \infty & \mbox{if } S = \set{0,1,2} \mbox{ or } 3 \in S,\\
    1 & \mbox{if } |S|=2 \mbox{ and } i+1 \bmod 3 \notin S,\\
    0 & \mbox{if } S = \{i,i+1 \bmod 3\}.
\end{cases}
\]
In words, agent $i \in \set{0,1,2}$ only wants to be in a cluster of size $2$ that does \emph{not} include the undesirable agent $3$; any other cluster has infinite loss. In an ideal such cluster (loss $0$), agent $0$ prefers to be with agent $1$, agent $1$ prefers to be with agent $2$, and agent $2$ prefers to be with agent $0$. The remaining clusters of size $2$ have loss $1$. 

Consider any clustering $C=(C_1,C_2)$. Without loss of generality, say $3 \in C_1$. We take three cases.
\begin{enumerate}
    \item If $|C_1| = 1$, then $C_2 = \set{0,1,2}$. Then, a group $S$ containing any two agents from $C_2$ can deviate, and each $i \in S$ would improve from infinite loss to finite loss.
    \item If $|C_1| \ge 3$, then a group $S$ containing two agents from $C_1$ other than agent $3$ can deviate, and each $i \in S$ would improve from infinite loss to finite loss. 
    \item Suppose $|C_1| = 2$ and let $C_1 \cap \set{0,1,2} = \set{i}$. Then, the group $S = \set{i,(i-1) \bmod 4}$ can deviate: agent $i$ would improve from infinite loss to finite loss, and agent $(i-1) \bmod 4$ would improve from a loss of $1$ to a loss of $0$.
\end{enumerate} 
In each case, every deviating agent improves by an infinite factor, yielding the desired result.  
\end{proof}

Next, for the more structured average loss function, we prove that the core can still be empty, albeit there is now room for a finite approximation. Its proof involves a more intricate construction.

\begin{restatable}{theorem}{coreavglb}\label{thm:core-avg-lb}
    For the average loss, there exists an instance in which no $\alpha$-core clustering exists for $\alpha < \frac{1+\sqrt{3}}{2} \approx 1.366$.
\end{restatable}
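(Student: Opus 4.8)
The plan is to exhibit one explicit finite metric instance $(N,d,k)$ and prove, by an exhaustive case analysis over all $k$-clusterings, that each of them admits a coalition $S$ with $|S| \ge n/k$ all of whose members strictly prefer $S$ by a factor of at least $\tfrac{1+\sqrt3}{2}$; this immediately rules out an $\alpha$-core clustering for every $\alpha < \tfrac{1+\sqrt3}{2}$. The design goal for the instance is \emph{frustration}: no clustering should be even approximately stable. This is delicate precisely because the average-loss metric is forced to be symmetric --- if $i$ is $j$'s ``best partner'' then $j$ is $i$'s --- so the simple directed-cycle obstruction used in the proof of \Cref{thm:core-arb-lb} cannot be copied verbatim. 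I would instead realize an analogous obstruction within the triangle inequality, e.g.\ by using a handful of tightly clustered groups of near-coincident agents and tuning the between-group distances so that the induced group-level preferences form an odd, non-matchable pattern; keeping $n$ and $k$ small, and exploiting the symmetries of the configuration, the number of clustering ``shapes'' that actually need to be checked stays manageable.

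For the case analysis, I would keep the pairwise distances as free parameters, and for each clustering shape single out a candidate deviating coalition $S$ and compute, for each $i \in S$, the ratio $\ell_i(C(i))/\ell_i(S)$ of $i$'s current average-distance loss to her loss inside $S$; the quantity to control is the minimum of these ratios over $i \in S$, which I want to be at least $\tfrac{1+\sqrt3}{2}$ for every shape. The free parameters are then pinned down by maximizing the worst case over all shapes. I expect the constant to emerge from a balancing act between two competing potential deviations: shrinking some agent's current loss to kill one deviation necessarily inflates another agent's loss and enables a second one, and equating the two improvement ratios produces a quadratic --- concretely $2\alpha^2 - 2\alpha - 1 = 0$, whose relevant root is exactly $\tfrac{1+\sqrt3}{2}$.

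The hard part is the first step: finding a metric with \emph{no} approximately stable clustering, since most ``natural'' metrics admit an obviously good clustering that already lies in the core, so some search (by hand, or by computer over small configurations, guided by the target algebraic value) is likely needed. Once the configuration is fixed, the remaining work is routine but must be done carefully: one has to enumerate \emph{all} clustering shapes, including unbalanced ones and coalitions strictly larger than $n/k$; verify the triangle inequality for the chosen distances; and remember that a deviating coalition must make \emph{every} one of its members strictly better off, so one cannot use a coalition containing an already-satisfied agent (in the extreme, an agent forming a singleton cluster has loss $0$ and will never join a deviation).
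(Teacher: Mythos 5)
Your proposal is a plan, not a proof: the entire mathematical content of this theorem is the explicit instance and the verification that \emph{every} clustering admits a deviating coalition improving by a factor $\alpha$, and you explicitly defer that step ("some search\dots is likely needed"). Nothing in the proposal certifies that such an instance exists, so as written there is a genuine gap. To your credit, you correctly reverse-engineered the algebraic origin of the constant: the paper's bound does come from balancing two competing deviations, yielding $\alpha = 1 + \frac{1}{2\alpha}$, i.e.\ $2\alpha^2 - 2\alpha - 1 = 0$ with root $\frac{1+\sqrt{3}}{2}$. But guessing the quadratic is not the same as producing a configuration that realizes it.

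The shape of instance you envision is also not the one that works in the paper, and your proposed route looks harder than the actual one. You aim for a small-$n$, small-$k$ configuration with a cyclic, "non-matchable" group-level preference pattern, analogous to \Cref{thm:core-arb-lb}. The paper instead takes $n$ and $k$ \emph{large} and uses $k/2+1$ areas at mutually infinite distance, so that a pigeonhole argument forces some area to host at most one finite-loss ("good") cluster; within that single area the gadget is not cyclic at all but a three-way tension around one middle agent $M_t$ flanked by two co-located groups $L_t, R_t$ at distance $1$ from each other and distance $\frac{n}{2k\alpha}$ from $M_t$: (i) if the good cluster omits $M_t$, then $M_t$ plus the smaller side deviates (improvement factor $\alpha$); (ii) if it contains the whole area, the diluted loss $\approx \frac{1}{4\alpha}+\frac12$ lets a half-$L_t$/half-$R_t$ coalition deviate to loss $\frac12$ (factor $\approx 1+\frac{1}{2\alpha}$); (iii) if it contains $M_t$ but excludes someone, that agent joins the far side to deviate. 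The infinite inter-area distances and the large coalition sizes are what make every member of each deviating coalition strictly improve --- the difficulty you correctly flagged about symmetric metrics admitting an "obviously good" clustering is defused by making the number of areas exceed the number of clusters available to serve them, not by a frustration cycle. Without an explicit instance and the exhaustive verification, your argument does not establish the theorem.
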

\begin{proof}
Let us construct an instance with an even number $k \geq 2$ of clusters. Let $\e=\frac{1+\sqrt{3}}{2}-\alpha$. We set the number of agents to be a multiple of $k$ such that $n\geq k\cdot \max\left\{\frac{1}{2\e}+\frac{1}{2},4\alpha^2\right\}$. Our construction has $k/2+1$ {\em areas}, each consisting of a few locations (points), with several agents placed on each of them. In particular, area $0$ has a single location $M_0$ with $k/2$ agents. For $i=1, 2, ..., k/2$, area $i$ consists of location $M_i$ hosting a single agent, a left location $L_i$ and a right location $R_i$ each hosting $\sfrac{n}{k}-1$ agents. We use $L_i$, $R_i$, and $M_i$ to denote both the corresponding points as well as the set of agents located in them. For $i=1, 2, ..., k/2$, the distance between points $L_i$ and $R_i$ is $1$ while both points are at distance $\frac{n}{2k\alpha}$ from point $M_i$. The distance between any two points in different areas is infinite. 

Consider a $k$-clustering $C$ of the agents. We call {\em bad} any cluster of $C$ that contains agents from different areas; notice that all points in such a cluster have infinite cost. A {\em good} cluster has all its points in the same area and, hence, all the agents contained in it have finite cost. Notice that $C$ has at most $k-1$ good clusters that contain points from areas $1$, $2$, ..., $k/2$. Among these areas, let $t$ be the one with the minimum number of good clusters. Thus, area $t$ either has all its agents in bad clusters or contains one good cluster that includes some of its agents. If at least $\sfrac{n}{k}$ of its agents belong to bad clusters in $C$, a deviating coalition of them would improve their cost from infinite to finite. So, in the following, we assume that clustering $C$ contains exactly one good cluster with at least $\sfrac{n}{k}$ agents from area $t$. 

We distinguish between three cases. The first one is when the good cluster does not contain the agent in $M_t$. Among $R_t$ and $L_t$, assume that $L_t$ has at most as many agents in the good cluster as $R_t$ (the other subcase is symmetric). Then, the cost of all agents of $L_t$ in the good cluster is at least $1/2$. The deviating coalition consisting of all agents in $L_t$ and the agent of $M_t$ (i.e., $\sfrac{n}{k}$ agents in total) improves the cost of all agents by a multiplicative factor at least $\alpha$. Indeed, the cost of the agent in $M_t$ improves from infinite to finite while the cost of any agent in $L_t$ improves from at least $1/2$ to $\frac{1}{2\alpha}$, since any such agent has distance $\frac{n}{2k\alpha}$ to the agent in $M_t$, and is colocated with the other agents in the deviating coalition.

The second case is when the good cluster contains all agents in area $t$. In this case, the cost of the agents in $L_t$ and $R_t$ is $\frac{\frac{n}{2k\alpha}+\sfrac{n}{k}-1}{2\sfrac{n}{k}-1}\geq \frac{1}{4
\alpha}+\frac{1}{2}-\frac{1}{2(2\sfrac{n}{k}-1)}\geq \frac{1}{4\alpha}+\frac{1}{2}-\frac{\e}{2}$ (the second inequality follows by the definition of $n$). Then, each agent in the deviating coalition containing $\frac{n}{2k}$ agents from $L_t$ and $\frac{n}{2k}$ agents from $R_t$ improves their cost to $1/2$, i.e., by a factor of at least $\frac{1}{2\alpha}+1-\e$.

The third case is when the good cluster contains the agent in $M_t$ but does not contain some agent $i$ from $L_t$ or $R_t$. We will assume that agent $i$ belongs to $L_t$ (the other subcase is symmetric). Notice that the cost of the agents in $R_t$ is at least $\frac{\frac{n}{2k\alpha}}{2\sfrac{n}{k}-2}\geq \frac{1}{4\alpha}$. To see why, notice that the claim is trivial for those agents of $R_t$ that belong to bad clusters while each of the agents of $R_t$ in the good cluster is at distance $\frac{n}{2k\alpha}$ to the agent in $M_t$ and there are at most $2\sfrac{n}{k}-2$ in the cluster. The deviating coalition of all agents in $R_t$ together with $i$ decreases their cost to just $k/n$, i.e., by a factor of at least $\frac{n}{4k\alpha}\geq \alpha$ (the inequality follows by the definition of $n$), while the cost of agent $i$ improves from infinite to finite. 

So, there is always a deviating coalition of at least $\sfrac{n}{k}$ agents with each of them improving their cost by a multiplicative factor of $\min\left\{\alpha,1+\frac{1}{2\alpha}-\e\right\}=\alpha$, as desired. The last equality follows by the definition of $\alpha$ and $\e$.
\end{proof}

To complement \Cref{thm:core-arb-lb,thm:core-avg-lb}, we show the existence of a clustering in the $O(\sfrac{n}{k})$-core (resp., $2$-core) for the average (resp., maximum) loss. Despite significant effort, we are unable to determine whether the core is always non-empty for the maximum loss, or whether a constant approximation of the core can be guaranteed for the average loss, which we leave as tantalizing open questions.
\begin{openq}
    For the maximum loss, does there always exist a clustering in the core? 
\end{openq}
\begin{openq}
For the average loss, does there always exist a clustering in the $\alpha$-core for some constant $\alpha$?
\end{openq}

\begin{algorithm}[t]
\caption{\gcc{}$(\alg)$}\label{alg:gcc}
    \KwIn{Set of agents $N$, metric $d$, number of clusters $k$}
    \KwOut{$k$-clustering $C=(C_1,\ldots C_k)$}
    $N'\gets N$\tcp*{Remaining set of agents}
    $j\gets 1$\tcp*{Current cluster number}
    \While{$N'\neq \emptyset$}{
        $C_j \gets \alg(N', d, \ceil{\sfrac{n}{k}})$\tcp*{Find and remove the next cohesive cluster}
        $N'\gets N' \setminus C_j$\;
        $j \gets j+1$\;
    }
    $C_j,C_{j+1},\ldots,C_k \gets \emptyset$\;
    \Return $C=(C_1,\ldots,C_k)$\;
\end{algorithm}

\begin{algorithm}[t]
\caption{\gcsub}\label{alg:smallest-agent-ball}
    \KwIn{Subset of agents $N' \subseteq N$, metric $d$, threshold integer $\tau$}
    \KwOut{Cluster $S$}
    \lIf{$|N'|\leq \tau$}{\Return $S\gets N'$}
    \For{$i \in N'$}{
        $\ell_i\gets \tau$-th closest agent in $N'$ to agent $i$\tcp*{Ties are broken arbitrarily}
        $r_i\gets d(i,\ell_i)$\tcp*{Smallest ball centered at agent $i$ capturing at least $\tau$ agents}
    }
    $i^*\gets\argmin_{i\in N'} r_i$\;
    \Return $S\gets$ the set of $\tau$ closest agents in $N'$ to agent $i^*$\;
\end{algorithm}

\paragraph{Our algorithms.} For the positive result, we design a simple greedy algorithm, \gcc (\Cref{alg:gcc}). It uses a subroutine $\alg$, which, given a subset of agents $N' \subseteq N$, metric $d$, and threshold $\tau$, finds a ``cohesive'' cluster $S$. Here, the term ``cohesive'' is informally used, but we will see a formalization in the next section. The threshold $\tau$ is meant to indicate the smallest size at which a group of agents deserve to form a cluster, but $\alg$ can return a cluster of size greater, equal, or less than $\tau$. 

The algorithm we use as $\alg$ in this section is given as \gcsub (\Cref{alg:smallest-agent-ball}). It finds the smallest ball centered at agent that captures at least $\tau$ agents, and returns a set of $\tau$ agents from this ball. We call this algorithm with the natural choice of $\tau = \ceil{\sfrac{n}{k}}$, so \gcc{}(\gcsub) iteratively finds the smallest agent-centered ball containing $\ceil{\sfrac{n}{k}}$ agents and removes $\ceil{\sfrac{n}{k}}$ in that ball, until fewer than $\ceil{\sfrac{n}{k}}$ agents remain, at which point all remaining agents are put into one cluster and any remaining clusters are left empty.

Overall, \gcc{}(\gcsub) is an adaptation of the \gc algorithm proposed by \citet{chen2019proportionally} for centroid clustering with two key differences in our non-centroid case: (i) while they grow balls centered at feasible cluster center locations, we grow balls centered at the agents, and (ii) while they continue to grow a ball that already captured $\ceil{\sfrac{n}{k}}$ agents (and any agents captured by this ball in the future are placed in the same cluster), we stop a ball as soon as it captures $\ceil{\sfrac{n}{k}}$ agents, which is necessary in our non-centroid case.\footnote{In centroid clustering, additional agents captured later on do not change the loss of the initial $\ceil{\sfrac{n}{k}}$ agents captured as loss is defined by the distance to the cluster center, which does not change. However, in non-centroid clustering, additional agents can change the loss of the initially captured agents, even from zero to positive, causing infinite core approximation when these agents deviate.} Nonetheless, due to its significant resemblance, we refer to the particular instantiation \gcc{}(\gcsub) as \gc hereinafter. The following result is one of our main results.

\begin{restatable}{theorem}{coreub}\label{thm:core-ub}
    For the average (resp., maximum) loss, the \gc algorithm is guaranteed to return a clustering in the $(2\cdot \ceil{\sfrac{n}{k}}-3)$-core (resp., $2$-core) in $O(kn)$ time complexity,  and these bounds are (almost) tight. 
\end{restatable}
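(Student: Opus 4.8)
The plan is to prove the upper bound by a charging argument anchored at the \emph{first agent of a would‑be deviating coalition that \gc captures}, and to prove the (almost) matching lower bound by an explicit ``decoy ball'' construction. Throughout write $\tau := \ceil{\sfrac{n}{k}}$, and set $\alpha := 2\tau-3$ for the average loss and $\alpha := 2$ for the maximum loss.

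\emph{Upper bound.} Suppose, for contradiction, that the clustering $C$ returned by \gc admits a deviating coalition $S$ with $|S|\ge\sfrac{n}{k}$ (hence $|S|\ge\tau$) and $\alpha\cdot\ell_i(S)<\ell_i(C(i))$ for every $i\in S$. Let $C_j$ be the first cluster produced by \gcc that meets $S$, fix any $i^*\in C_j\cap S$, and let $N'$ be the set of remaining agents at the start of that iteration. Since no agent of $S$ was removed before, $S\subseteq N'$, so $|N'|\ge\tau$; in particular $i^*$ is not placed by the final ``leftover'' step. If $|N'|=\tau$ then \gcsub returns $C_j=N'=S$, whence $\ell_i(C_j)=\ell_i(S)$ for all $i\in S$, contradicting the deviation because $\alpha\ge1$. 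Otherwise $|N'|>\tau$, so \gcsub picks a center $c\in N'$ of minimum capture radius $\rho:=r_c$, and $C_j$ consists of the $\tau$ agents of $N'$ closest to $c$, all within distance $\rho$ of $c$.

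The argument now rests on two inequalities. First, since $c$ minimizes $r_{(\cdot)}$ over $N'$ we have $r_{i^*}\ge\rho$, so at most $\tau-1$ agents of $N'$ (including $i^*$ itself) are at distance strictly less than $\rho$ from $i^*$; hence at least $|S|-\tau+1\ge1$ agents of $S$ are at distance $\ge\rho$ from $i^*$, giving $\ell_{i^*}(S)\ge\rho$ for the maximum loss and $\ell_{i^*}(S)\ge\tfrac{|S|-\tau+1}{|S|}\rho\ge\tfrac{\rho}{\tau}$ for the average loss (the last step uses $|S|\ge\tau\ge2$). Second, by the triangle inequality every $b\in C_j$ satisfies $d(i^*,b)\le d(i^*,c)+d(c,b)\le2\rho$: for the maximum loss this immediately yields $\ell_{i^*}(C(i^*))=\ell_{i^*}(C_j)\le2\rho\le2\,\ell_{i^*}(S)$; for the average loss, isolating the zero term $b=i^*$ and the term $b=c$ (which is $\le\rho$) from the remaining $\tau-2$ terms (each $\le2\rho$) gives $\ell_{i^*}(C_j)\le\tfrac{\rho+(\tau-2)\cdot2\rho}{\tau}=\tfrac{(2\tau-3)\rho}{\tau}\le(2\tau-3)\,\ell_{i^*}(S)$. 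Either way $\alpha\,\ell_{i^*}(S)\ge\ell_{i^*}(C(i^*))$, contradicting the assumed deviation (the case $c=i^*$ only improves the bound). The claimed running time follows from a direct implementation of \gcc with \gcsub, which we would not detail.

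\emph{Lower bound.} I would exhibit, for each loss, an instance with $n=2\tau$, $k=2$ (padded by extra, far‑separated tight clusters to reach arbitrary $n,k$ with the same $\tau$) on which \gc is not in the $(\alpha-\e)$-core. Fix a ``victim'' $h$, a coalition $S=\{h,a_1,\dots,a_{\tau-1}\}$, a ``decoy center'' $c$, co‑located agents $b_1,\dots,b_{\tau-2}$ placed so that $c$ has the (strictly, after a perturbation) smallest capture radius $\rho$ and $C_1=\{c,h,b_1,\dots,b_{\tau-2}\}$ with each $b_\ell$ nearly antipodal to $h$ around $c$ (distance $\approx 2\rho$ from $h$), and one far agent $f$. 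Then \gc forms $C_1$ first and is forced to put the remaining $\tau$ agents $\{a_1,\dots,a_{\tau-1},f\}$ into $C_2$. For the \emph{maximum} loss, take the $a_\ell$ mutually at distance $\rho$, so $\ell_h(S)=\ell_{a_\ell}(S)=\rho$, while $\ell_h(C_1)\approx2\rho$ and $\ell_{a_\ell}(C_2)=d(a_\ell,f)$ can be made arbitrarily large; the binding constraint is $h$'s, giving a deviation for every $\alpha<2-\e$. For the \emph{average} loss, instead place $a_1,\dots,a_{\tau-2}$ infinitesimally close to $h$, so that $\ell_h(S)\approx\rho/\tau$ matches $\ell_h(C_1)\approx\tfrac{(2\tau-3)\rho}{\tau}$ and (with $f$ far enough that the $a_\ell$'s constraints are slack) $h$'s constraint yields a deviation for every $\alpha<2\tau-3-\e$; this placement makes $r_h=r_c$, a tie that the construction must have \gcsub break in favour of $c$.

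The main obstacle is the simultaneity required in the lower bound: one must force \gc to choose the decoy center $c$ (rather than $h$, which would group $h$ with its cheap coalition‑mates) \emph{and} ensure that \emph{every} member of the deviating coalition — not just $h$ — is strictly worse off in \gc's output than in $S$; pushing the victim's alternative loss down to $\rho/\tau$ is exactly what creates the cohesive clump around $h$ that \gc is designed to detect, so the $(2\tau-3)$ bound is only attained by exploiting the freedom of tie‑breaking, which is the source of the word ``almost'' in the statement. Verifying that the perturbation indeed makes $r_c$ the unique minimum in the maximum‑loss case (so that no tie‑breaking is needed there) and that the leftover step produces $C_2$ exactly as intended are the remaining routine checks.
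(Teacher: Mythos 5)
Your upper-bound argument is essentially the paper's: both anchor the analysis at the first member $i^*$ of the putative deviating coalition $S$ to be captured, bound $\ell_{i^*}(C(i^*))$ by routing every distance through the center of the winning ball (one term $d(i^*,c)\le\rho$ plus $2(\tau-2)$ terms of size at most $\rho$ or $2\rho$, giving the $2\tau-3$ with $\tau=\ceil{\sfrac{n}{k}}$), and lower-bound $\ell_{i^*}(S)$ by noting that at most $\tau-1$ members of $S$ can lie strictly inside a ball of radius $\rho$ around $i^*$, so $\ell_{i^*}(S)\ge\frac{|S|-\tau+1}{|S|}\rho\ge\rho/\tau$; the running-time claim is handled identically. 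Where you genuinely depart is the tightness construction. The paper uses a single six-location line instance (one agent at $-\infty$ and clumps at $-1,0,\e,1,2-\e$) that requires no tie-breaking but, for the average loss, only witnesses a ratio of roughly $\tau-\sfrac{3}{2}$ --- half the upper bound --- which is the actual source of the ``(almost)'' in the statement, not tie-breaking. Your decoy-ball construction is sharper: granting adversarial tie-breaking in \gcsub, it witnesses the full $2\tau-3$ for the average loss (and the same factor $2$ for the maximum loss), which is strictly more than the paper establishes. The tie $r_h=r_c$ you flag is indeed unavoidable for the exact bound, since $\ell_h(S)\approx\rho/\tau$ forces every member of $S$ within distance about $\rho$ of $h$ (hence $r_h\le\rho$), while placing the $b_\ell$'s at distance $2\rho$ from $h$ inside $c$'s ball forces $r_c\ge\rho$; since \gcsub breaks ties arbitrarily, this is a legitimate worst-case lower bound, though a reader should note that the exact-tightness claim is contingent on the tie-breaking rule, whereas the paper's weaker bound is not. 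Your remaining ``routine checks'' (uniqueness of the minimizing center after perturbation in the maximum-loss case, and the leftover cluster being exactly $\{a_1,\dots,a_{\tau-1},f\}$) do go through.
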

\begin{proof}
    Let $C=\{C_1,\ldots C_k\}$ be the $k$-clustering returned by \gc. Let $S\subseteq N$ be any set of at least $\sfrac{n}{k}$ agents such that their average loss satisfies
    \begin{align}
        \ell_{i} (C(i))> (2\cdot \ceil{\sfrac{n}{k}}-3) \cdot \ell_i(S),\label{ineq:core-aver-1}
    \end{align}
for every $i\in S$. 

Let $i^*$ be the agent that was the first among the agents in $S$ that was included in some cluster by the algorithm. Consider the time step before this happens and let $i'\in C(i^*)$ be the agent that had the minimum distance $R$ from the $\lceil \sfrac{n}{k}\rceil$-th agent in $C(i^*)$ among all agents that had not been included to clusters by the algorithm before. Then, 
\begin{align}\nonumber
    \ell_{i^*}(C(i^*)) &= \frac{1}{|C(i^*)|} \sum_{i\in C(i^*)}{d(i^*,i)}\\\nonumber
    &\leq \frac{1}{|C(i^*)|}\left(d(i^*,i')+\sum_{i\in C(i^*)\setminus \{i',i^*\}}{(d(i^*,i')+d(i',i))}\right)\\\label{ineq:core-aver-2}
    &\leq \left(2-\frac{3}{\lceil \sfrac{n}{k}\rceil}\right)\cdot R.
\end{align}
The first inequality follows by applying the triangle inequality. The second inequality follows since $C(i^*)$ has $\lceil \sfrac{n}{k}\rceil$ agents and, thus, the RHS has $2\lceil \sfrac{n}{k}\rceil-3$ terms representing distances of agents in $C(i^*)$ from agent $i'$, each bounded by $R$.

Now, recall that, at the time step the algorithm includes cluster $C(i^*)$ in the clustering, none among the (at least $\lceil \sfrac{n}{k}\rceil$) agents of $S$ have been included in any clusters. Then, $S$ contains at most $\lceil \sfrac{n}{k}\rceil -1$ agents located at distance less than $R$ from agent $i^*$; if this were not the case, the algorithm would have included  agent $i^*$ together with $\lceil \sfrac{n}{k}\rceil-1$ other agents of $S$ in a cluster instead of the agents in $C(i^*)$. Thus, $S$ contains at least $|S|-\lceil \sfrac{n}{k}\rceil +1$ agents at distance at least $R$ from agent $i^*$. Thus,
\begin{align}\label{ineq:core-aver-3}
    \ell_{i^*}(S) &=\frac{1}{|S|}\sum_{i\in S}{d(i^*,i)}\geq \frac{|S|-\lceil \sfrac{n}{k}\rceil+1}{|S|}\cdot R\geq \frac{1}{\lceil \sfrac{n}{k}\rceil} \cdot R.
\end{align}
The second inequality follows since $|S|\geq \lceil \sfrac{n}{k}\rceil$. Now, \Cref{ineq:core-aver-2} and \Cref{ineq:core-aver-3} yield $\ell_{i^*} (C(i^*))\leq (2\cdot \ceil{\sfrac{n}{k}}-3) \cdot \ell_{i^*}(S)$, contradicting \Cref{ineq:core-aver-1}. 

Now, assume that there exists a set $S\subseteq N$ of at least $\sfrac{n}{k}$ agents such that their maximum loss satisfies
    \begin{align}\label{ineq:core-max-1}
        \ell_{i} (C(i))> 2 \cdot \ell_i(S),
    \end{align}
for every $i\in S$. Again, let $i^*$ be the agent that was the first among the agents in $S$ that was included in some cluster by the algorithm. Consider the time step before this happens and let $i'\in C(i^*)$ be the agent that had the minimum distance $R$ from the $\lceil \sfrac{n}{k}\rceil$-th agent in $C(i^*)$ among all agents that had not been included to clusters by the algorithm before. Then, the maximum loss of agent $i^*$ for cluster $C(i^*)$ is
\begin{align}\label{ineq:core-max-2}
    \ell_{i^*}(C(i^*)) &= \max_{i\in C(i^*)}{d(i^*,i)} \leq \max_{i\in C(i^*)}{(d(i^*,i')+d(i',i))}\leq 2\cdot R.
\end{align}
The first inequality follows by applying the triangle inequality and the second one since all agents in $C(i^*)$ are at distance at most $R$ from agent $i'$. We also have
\begin{align}\label{ineq:core-max-3}
    \ell_{i^*}(S)&=\max_{i\in S}{d(i^*,i)}\geq R,
\end{align}
otherwise, the algorithm would include a subset of $\lceil \sfrac{n}{k}\rceil$ agents from set $S$ in the clustering instead of $C(i^*)$. Together, \Cref{ineq:core-max-2} and \Cref{ineq:core-max-3} contradict \Cref{ineq:core-max-1}. This completes the proof of the upper bounds.

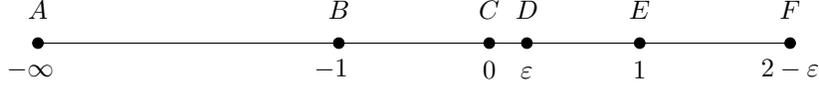
\begin{figure}[t]
    \centering
    \begin{tikzpicture}
        \draw[-] (0,0) -- (10,0);
        \node[above] at (-0.1,-0.6) {$-\infty$};
        \node[above] at (0,0.2) {$A$};
        \filldraw[black] (0,0) circle (2pt);
        \node[above] at (3.9,-0.6) {$-1$};
        \node[above] at (4,0.2) {$B$};
        \filldraw[black] (4,0) circle (2pt);

        \node[above] at (6,-0.6) {$0$};
        \node[above] at (6,0.2) {$C$};
        \filldraw[black] (6,0) circle (2pt);

           \node[above] at (6.5,-0.6) {$\e$};
        \node[above] at (6.5,0.2) {$D$};
        \filldraw[black] (6.5,0) circle (2pt);

           \node[above] at (8,-0.6) {$1$};
        \node[above] at (8,0.2) {$E$};
        \filldraw[black] (8,0) circle (2pt);

           \node[above] at (10,-0.6) {$2-\e$};
        \node[above] at (10,0.2) {$F$};
        \filldraw[black] (10,0) circle (2pt);
        
    \end{tikzpicture}
    \caption{The instance used to show the lower bounds in~\Cref{thm:core-ub} and~\Cref{lem:gcsub-gccsub}. }
    \label{fig:lower-bound}
\end{figure}

We now show that the analysis is tight  for both the average and the maximum loss functions using the instance depicted in~\Cref{fig:lower-bound} with one agent at locations $A$, $D$, and $E$, two agents at location $B$, $n/2-3$ agents at location $C$, and $n/2-2$ agents at location $F$.  
Suppose that $k=2$. It is easy to see that \gc returns a $2$-clustering with the agents located at points $A$, $B$, and $C$ in one cluster and the agents located at points $D$, $E$, and $F$ in another. Notice that the agents at locations $B$ and $C$ have infinite loss under both loss functions, while the agent located at position $D$ has maximum loss $2(1-\e)$ and average loss $\frac{(n-3)(1-\e)}{n/2}$. Now, consider the deviating coalition of the $n/2$ agents at locations $B$, $C$, and $D$. The agents at $B$ and $C$ improve their loss from infinite to finite, while the agent located at $C$ improves her maximum loss to $1+\e$ and her average loss to $\frac{2+(n/2-1)\e}{n/2}$, for multiplicative improvements approaching $2$ and $n/2-3/2$ as $\e$ approaching $0$.

Since \gc calls \gcsub at most $k$ times and \gcsub does at most $n$ iterations in each call, we easily see that the time complexity of  \gc  is $O(kn)$. 
\end{proof}

In many applications of clustering, such as clustered federated learning, the average loss is realistic because the agent's loss depends on all the agents in her cluster, and not just on a single most distant agent. Hence, it is a little disappointing that the only approximation to the core that we are able to establish in this case is $\alpha = O(\sfrac{n}{k})$, which is rather unsatisfactory. We demonstrate two ways to circumvent this negative result. First, we consider demanding that any deviating coalitions be of size at least $\delta \cdot \sfrac{n}{k}$ for some $\delta > 1$. In \Cref{app:core-bicriteria}, we show that any \emph{constant} $\delta > 1$ reduces the approximation factor $\alpha$ to a constant. In the next section, we explore a different approach: we relax the core to a slightly weaker fairness guarantee, which we show can be satisfied exactly, even under arbitrary losses. 

\section{Fully Justified Representation}\label{sec:fjr}

\citet{peters2021proportional} introduced \emph{fully justified representation} (FJR) as a relaxation of the core in the context of approval-based committee selection. The following definition is its adaptation to non-centroid clustering. Informally, for a deviating coalition $S$, the core demands that the loss $\ell_i(S)$ of each member $i$ after deviation be lower than \emph{her own loss before deviation}, i.e., $\ell_i(C(i))$. FJR demands that it be lower than the \emph{minimum loss of any member before deviation}, i.e., $\min_{j \in S} \ell_j(C(j))$.

\begin{definition}[$\alpha$-Fully Justified Representation ($\alpha$-FJR)]
    For $\alpha \ge 1$, a $k$-clustering $C=(C_1,\ldots,C_k)$ satisfies $\alpha$-fully  justified representation ($\alpha$-FJR) if there is no group of agents $S \subseteq N$ with $|S| \ge \sfrac{n}{k}$ such that $\alpha \cdot \ell_i(S) < \min_{j \in S} \ell_j(C(j))$ for each $i \in S$, i.e., if $\alpha \cdot \max_{i \in S} \ell_i(S) < \min_{j \in S} \ell_j(C(j))$. We refer to $1$-FJR simply as FJR.  
\end{definition}

We easily see that $\alpha$-FJR is a relaxation of $\alpha$-core.
\begin{proposition}
    For $\alpha \ge 1$, $\alpha$-core implies $\alpha$-FJR for arbitrary loss functions.
\end{proposition}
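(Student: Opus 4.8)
The plan is to argue by contraposition: I will show that any clustering which fails $\alpha$-FJR also fails to lie in the $\alpha$-core. So suppose $C = (C_1,\ldots,C_k)$ is not $\alpha$-FJR. By definition, there exists a group $S \subseteq N$ with $|S| \ge \sfrac{n}{k}$ such that $\alpha \cdot \ell_i(S) < \min_{j \in S} \ell_j(C(j))$ for every $i \in S$.

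The key (and essentially only) observation is that for every fixed $i \in S$, the quantity $\min_{j \in S} \ell_j(C(j))$ is a minimum over a set that includes $i$ itself, so $\min_{j \in S} \ell_j(C(j)) \le \ell_i(C(i))$. Chaining this with the FJR-violation inequality gives $\alpha \cdot \ell_i(S) < \ell_i(C(i))$ for every $i \in S$. Since $|S| \ge \sfrac{n}{k}$, the group $S$ is exactly a deviating coalition witnessing that $C$ is not in the $\alpha$-core. Contrapositively, $\alpha$-core implies $\alpha$-FJR.

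There is no real obstacle here; the statement is immediate from the two definitions, the only content being that replacing a per-agent ``before'' loss by the coalition-wide minimum ``before'' loss can only make the deviation condition harder to trigger, hence $\alpha$-FJR is a weakening. I would keep the write-up to three or four lines, making sure to note that the argument uses nothing about the structure of the loss functions $\ell_i$, so it holds for arbitrary losses (and therefore also for the average and maximum loss special cases).
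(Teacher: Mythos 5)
Your proof is correct and is essentially the paper's argument in contrapositive form: both hinge on the single observation that $\min_{j \in S} \ell_j(C(j)) \le \ell_i(C(i))$ for each $i \in S$, so an FJR-violating coalition is automatically a core-violating one. No meaningful difference in approach.
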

\begin{proof}
    Suppose that a clustering $C$ is in the $\alpha$-core. Thus, for every $S \subseteq N$ with $|S| \geq \sfrac{n}{k}$, there exists $i \in S$ for which $\alpha \cdot \ell_i(S) \ge \ell_i(C(i)) \ge \min_{j \in S} \ell_j(C(j))$, so the clustering is also $\alpha$-FJR.
\end{proof}

\subsection{Arbitrary Loss Functions}

We prove that an (exactly) FJR clustering is guaranteed to exist, even for arbitrary losses. For this, we need to define the following computational problem. 

\begin{definition}[\gccsub]
  Given a set of agents $N$ and a threshold $\tau$, the \gccsub problem asks to find a cluster $S \subseteq N$ of size at least $\tau$ such that the maximum loss of any $i \in S$ for $S$ is minimized, i.e., find $\argmin_{S \subseteq N' : |S| \ge \tau} \max_{i \in S} \ell_i(S)$. 
  
  For $\lambda \ge 1$, a $\lambda$-approximate solution $S$ satisfies $ \max_{i \in S} \ell_i(S) \le \lambda \cdot \max_{i \in S'} \lambda_i(S')$ for all $S' \subseteq N$ with $|S'| \ge \tau$, and a $\lambda$-approximation algorithm returns a $\lambda$-approximate solution on every instance.
\end{definition}

We show that plugging in a $\lambda$-approximation algorithm $\alg$ to the \gccsub problem into the \gcc algorithm designed in the previous section yields a $\lambda$-FJR clustering. In order to work with arbitrary losses, we need to consider a slightly generalized \gcc algorithm, which takes the loss functions $\ell_i$ as input instead of a metric $d$, and passes these loss functions to algorithm $\alg$. 

\begin{restatable}{theorem}{fjrexact}\label{thm:fjr-exact}
    For arbitrary losses, $\alpha \ge 1$, and an $\alpha$-approximation algorithm $\alg$ for the \gccsub problem, \gcc{}$(\alg)$ is guaranteed to return a $\alpha$-FJR clustering. Hence, an (exactly) FJR clustering is guaranteed to exist.
\end{restatable}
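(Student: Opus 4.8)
The plan is a proof by contradiction that parallels the argument behind \Cref{thm:core-ub}, but leverages the one feature that distinguishes FJR from the core: the benchmark a deviating coalition $S$ must beat is the \emph{minimum} pre-deviation loss inside $S$, namely $\min_{j\in S}\ell_j(C(j))$, rather than each member's own loss. First I would fix notation: let $C=(C_1,C_2,\ldots)$ be the output of the generalized \gcc$(\alg)$, where $C_j$ is the cluster returned by $\alg$ on input $(N'_j,\ceil{\sfrac{n}{k}})$ and $N'_j$ is the set of agents not yet removed when cluster $j$ is formed. A routine preliminary observation is that at most $k$ non-empty clusters are produced — each $C_j$ has size at least $\ceil{\sfrac{n}{k}}\ge \sfrac{n}{k}$ except possibly the last, so $k$ of them already account for all $n$ agents — hence padding with empty clusters yields a valid $k$-clustering. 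I would also note that the approximation guarantee of $\alg$ is only ever invoked at steps $j$ with $|N'_j|\ge\ceil{\sfrac{n}{k}}$, so whatever convention $\alg$ uses on a final short step is irrelevant.

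Next, suppose for contradiction that $C$ is not $\alpha$-FJR: there is a coalition $S\subseteq N$ with $|S|\ge \sfrac{n}{k}$ (hence $|S|\ge\ceil{\sfrac{n}{k}}$ as $|S|$ is an integer) such that $\alpha\cdot\ell_i(S)<\min_{j\in S}\ell_j(C(j))$ for every $i\in S$. Let $i^*$ be the first agent of $S$ that \gcc assigns to a cluster, say $i^*\in C_{j^*}$ with $j^*$ minimal among the clusters that contain agents of $S$. By minimality of $j^*$, no agent of $S$ has been removed before step $j^*$, so $S\subseteq N'_{j^*}$; together with $|S|\ge\ceil{\sfrac{n}{k}}$ this makes $S$ a feasible candidate for the \gccsub instance $(N'_{j^*},\ceil{\sfrac{n}{k}})$ that $\alg$ (approximately) solves at step $j^*$.

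The key step is then to apply the $\alpha$-approximation guarantee of $\alg$ against the competitor $S$:
\[
\max_{i\in C_{j^*}}\ell_i(C_{j^*})\ \le\ \alpha\cdot\max_{i\in S}\ell_i(S).
\]
Specializing the left-hand side to $i^*\in C_{j^*}=C(i^*)$ and the right-hand side to $i^*\in S$ gives $\ell_{i^*}(C(i^*))\le\alpha\cdot\ell_{i^*}(S)$. But the deviation hypothesis, applied to the member $i^*\in S$, gives $\alpha\cdot\ell_{i^*}(S)<\min_{j\in S}\ell_j(C(j))\le\ell_{i^*}(C(i^*))$, a contradiction. Hence $C$ is $\alpha$-FJR. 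Finally, to deduce existence of an exactly FJR clustering I would instantiate $\alg$ with the brute-force algorithm that enumerates all $S\subseteq N'$ with $|S|\ge\tau$ and returns one minimizing $\max_{i\in S}\ell_i(S)$; this is a (possibly exponential-time) $1$-approximation of \gccsub, so \gcc of it outputs a $1$-FJR, i.e.\ an exactly FJR, clustering.

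I do not anticipate a genuine obstacle: the only point that needs care is the ``first captured agent'' step — verifying that at the moment $i^*$ is placed into $C_{j^*}$, the whole coalition $S$ is still present in $N'_{j^*}$ and of size at least $\ceil{\sfrac{n}{k}}$, so that $S$ truly competes in the subproblem $\alg$ is solving. Everything else is bookkeeping, and, in contrast to the core, FJR is essentially designed to make this greedy-capture argument go through cleanly.
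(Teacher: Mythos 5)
Your proposal follows essentially the same route as the paper's proof: take the first agent $i^*$ of a hypothetical deviating coalition $S$ to be captured, observe that $S\subseteq N'_{j^*}$ and $|S|\ge\ceil{\sfrac{n}{k}}$ so $S$ is a feasible competitor in that \gccsub instance, and contradict the $\alpha$-approximation guarantee of $\alg$. One line is mis-stated, though: from $\max_{i\in C_{j^*}}\ell_i(C_{j^*})\le\alpha\cdot\max_{i\in S}\ell_i(S)$ you cannot ``specialize the right-hand side to $i^*$'' to conclude $\ell_{i^*}(C(i^*))\le\alpha\cdot\ell_{i^*}(S)$, since replacing a maximum by one of its terms on the larger side of an inequality goes the wrong way; the repair is immediate --- keep the maximum and invoke the FJR-violation hypothesis in its equivalent form $\alpha\cdot\max_{i\in S}\ell_i(S)<\min_{j\in S}\ell_j(C(j))\le\ell_{i^*}(C(i^*))$, which closes the contradiction exactly as in the paper.
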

\begin{proof}
    Suppose for contradiction that the $k$-clustering $C=\set{C_1,\ldots C_k}$ returned by \gcc{}$(\alg)$ on an instance is not $\alpha$-FJR. Then, there exists a group $S \subseteq N$ with $|S| \geq \sfrac{n}{k}$ such that $\alpha \cdot \max_{i \in S} \ell_i(S) < \min_{i \in S} \ell_i(C(i))$. Let $i^*$ be  the first agent in $S$ that was assigned to a cluster during the execution of \gcc, by calling $\alg$ on a subset of agents $N'$. Note that $S\subseteq N'$. Then, we have that $
         \max_{i\in C(i^*)} \ell_i(C(i^*)) \geq \ell_{i^*}(C(i^*)) > \alpha \cdot \max_{i \in S} \ell_i(S),
     $
     which contradicts $\alg$ being an $\alpha$-approximation algorithm for the \gccsub problem. Hence, \gcc{}$(\alg)$ must return an $\alpha$-FJR clustering.

     Using an exact algorithm $\alg$ for the \gccsub problem (e.g., the inefficient brute-force algorithm), we get that a $1$-FJR clustering is guaranteed to exist. 
\end{proof}

\subsection{Average and Maximum Loss Functions}
Let $\alg^*$ be an exact algorithm for the \gccsub problem for the average (resp., maximum) loss. First, we notice that we cannot expect it to run in polynomial time, even for these structured loss functions. This is because it can be used to detect whether a given undirected graph admits a clique of at least a given size,\footnote{\label{fnt:reduction}To detect whether an undirected graph $G = (V,E)$ has a clique of size at least $t$, we run $\alg^*$ with each node being an agent, the distance between two agents being $1$ if they are neighbors and $2$ otherwise, and $k = n/t$. A clique of size at least $t$ exists in $G$ if and only if a cluster $S$ exists with $|S| \ge n/k = t$ and $\max_{i \in S} \ell_i(S) = 1$.} which is an NP-complete problem. Hence, \gcc{}$(\alg^*)$ is an inefficient algorithm.

One can easily check that the proof of \Cref{thm:core-ub} extends to show that it achieves not only $1$-FJR (\Cref{thm:fjr-exact}), but also in the $O(n/k)$-core (resp., $2$-core) for the average (resp., maximum) loss. For the core, \gc is an obvious improvement as it achieves the same approximation ratio but in polynomial time. For FJR, we show that \gc still achieves a constant approximation in polynomial time. We prove this by showing that the \gcsub algorithm used by \gc achieves the desired approximation to the \gccsub problem, and utilizing \Cref{thm:fjr-exact}. 

\begin{restatable}{lemma}{gcsubgccsub}\label{lem:gcsub-gccsub}
    For the average (resp., maximum) loss, \gcsub is a $4$-approximation (resp., $2$-approximation) algorithm for the \gccsub problem, and this is tight.
\end{restatable}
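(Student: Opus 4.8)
The plan is to handle both loss functions through a single geometric fact about optimal \gccsub solutions and then simply track what \gcsub does to it. Fix an instance of \gccsub on an agent set $N'$ with threshold $\tau$, let $S^*$ be an optimal cluster (so $|S^*|\ge\tau$), and write $\mathrm{OPT}=\max_{i\in S^*}\ell_i(S^*)$. (If $|N'|\le\tau$ then \gcsub returns $N'$, which is the unique feasible cluster, so it is exact; assume $|N'|>\tau$.) The key step is to bound the \emph{diameter} of $S^*$ in terms of $\mathrm{OPT}$. For the maximum loss this is immediate, since $\mathrm{diam}(S^*)=\max_{i,j\in S^*}d(i,j)=\max_{i\in S^*}\ell_i(S^*)=\mathrm{OPT}$. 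For the average loss I would use the triangle inequality inside $S^*$: for any $c,f\in S^*$,
\[ \ell_c(S^*)=\frac{1}{|S^*|}\sum_{j\in S^*}d(c,j)\;\ge\;\frac{1}{|S^*|}\sum_{j\in S^*}\bigl(d(c,f)-d(f,j)\bigr)=d(c,f)-\ell_f(S^*), \]
so $d(c,f)\le\ell_c(S^*)+\ell_f(S^*)\le 2\,\mathrm{OPT}$, i.e. $\mathrm{diam}(S^*)\le 2\,\mathrm{OPT}$. This is precisely the point at which the average loss pays an extra factor of $2$ over the maximum loss, and it is the conceptual heart of the argument (and the feature that is absent in the centroid setting).

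Given the diameter bound, the rest is routine. Pick any $c\in S^*$; since $S^*\subseteq N'$ and every agent of $S^*$ is within $\mathrm{diam}(S^*)$ of $c$, agent $c$ has at least $|S^*|\ge\tau$ agents of $N'$ within distance $\mathrm{diam}(S^*)$, so the radius $r_c$ that \gcsub computes for $c$ (the distance to its $\tau$-th closest agent) satisfies $r_c\le\mathrm{diam}(S^*)$. Hence the radius of the agent $i^*$ actually selected by \gcsub satisfies $r_{i^*}=\min_{i\in N'}r_i\le r_c\le\mathrm{diam}(S^*)$. The returned cluster $S$ consists of $\tau$ agents all within $r_{i^*}$ of $i^*$, so $\mathrm{diam}(S)\le 2r_{i^*}$ by the triangle inequality, and therefore $\max_{i\in S}\ell_i(S)\le\mathrm{diam}(S)\le 2r_{i^*}\le 2\,\mathrm{diam}(S^*)$. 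Substituting $\mathrm{diam}(S^*)\le 2\,\mathrm{OPT}$ yields the $4$-approximation for the average loss, and $\mathrm{diam}(S^*)=\mathrm{OPT}$ yields the $2$-approximation for the maximum loss.

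For tightness I would exhibit an explicit instance — the construction in \Cref{fig:lower-bound}, suitably instantiated, is the natural candidate — in which \gcsub is forced into a bad choice: the globally smallest agent-centered ball capturing $\tau$ agents has radius close to $\mathrm{diam}(S^*)$ (which for the average loss is close to $2\,\mathrm{OPT}$), while the $\tau$ agents it captures are spread out so that the internal distances are nearly as large as that ball's diameter, pushing $\max_{i\in S}\ell_i(S)$ up to essentially $2r_{i^*}$; for the maximum loss the same kind of instance makes $r_{i^*}$ nearly $\mathrm{OPT}$ with output diameter nearly $2r_{i^*}$. The hard part here will be the bookkeeping rather than any idea: one must compute $r_i$ for every location, identify $i^*$, and evaluate the losses both inside the returned cluster and inside an optimal cluster, and — because the loss is an average rather than a single worst distance — one has to make the ``good'' cluster large enough that its optimal value stays bounded away from $0$, so that the ratio is meaningful. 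This is a finite case check, so I expect it to be the most laborious but least conceptually risky part of the proof.
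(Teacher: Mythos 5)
Your upper-bound argument is correct and essentially the paper's own proof rearranged: the paper fixes the selected radius $R=r_{i^*}$, bounds the output by $2R$ via the triangle inequality, and exhibits two agents of the optimal cluster at distance at least $R$ whose average losses must sum to at least that distance — which is exactly your chain $\max_{i\in S}\ell_i(S)\le 2r_{i^*}\le 2\,\mathrm{diam}(S^*)\le 4\,\mathrm{OPT}$ read in the other direction. The only difference is that the paper carries out the tightness verification concretely on the instance of \Cref{fig:lower-bound} (placing roughly $n/4$ agents at $B$, $n/4-1$ at $C$, and $n/2-2$ at $F$ so that \gcsub returns $\{D,E,F\}$ while $\{B,C,D\}$ is a factor $4$, resp.\ $2$, better), whereas you only outline that computation.
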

\begin{proof}
    For some $N'\subseteq N$, let $S$ be the most cohesive cluster  and  let $S'\neq S$ be the cluster that \gcsub returns. Suppose that $S'$ consists by the $\ceil{\sfrac{n}{k}}$ closest agents in $N'$ to some agent $i^*$ and the distance of $i^*$ to her $\ceil{\sfrac{n}{k}}$-th  closest agent in $N'$ is equal to $R$. From the triangle inequality, we get that every two agents in $S'$ have distance at most $2R$, and therefore, under both loss functions, we get that  $max_{i\in S'}\ell_i(S')\leq 2\cdot R$.

     We show  that there are  two individuals in $S$,  $i_1$ and $i_2$, such that the $d(i_1,i_2)\geq R$. Indeed if for each $i,i'\in S$, $d(i,i')<R$, then $i^*$ would  not be the agent in $N'$ with the smallest distance to her $\ceil{\sfrac{n}{k}}$-th  closest agent in $N'$ and  \gcsub would not return $S$.  From this fact, we immediately get a $2$-approximation for the maximum loss, since  $max_{i\in S}\ell_i(S)\geq  R$.

     Now, for the average cost, note that 
    \begin{align*}
       |S|\cdot  d(i_1,i_2) = \sum_{i\in S} d(i_1,i_2)
        \leq
        \sum_{i\in S} \left( d(i_1,i)+ d(i,i_2) \right).
    \end{align*}
    From this we get, that either $\sum_{i\in S}  d(i_1,i)\geq |S|\cdot  d(i_1,i_2) /2 $  or $\sum_{i\in S}  d(i_2,i)\geq |S|\cdot  d(i_1,i_2) /2 $. Therefore, either $\ell_{i_1}(S)\geq d(i_1,i_2)/2\geq R/2$  or  $\ell_{i_2}(S)\geq d(i_1,i_2)/2\geq R/2$.  This means that $max_{i\in S}\ell_i(S)\geq  R/2$ and the lemma follows.

    Next, we show that there are  instances for which \gcsub achieves exactly  these bounds. Consider the instance showing in~\Cref{fig:lower-bound},
 For $k=2$, suppose there are = $1$ point at position $A$, $n/4$ points at position $B$,   $n/4-1$ at position $C$,  $1$ point at position $D$,   $1$ point at position $E$ at position $D$, and $n/2-2$ points at position $F$.  It is not hard to see that \gcsub will return the cluster $S=\{D,E,F\}$. But  $S'=\{B,C,D\}$ can reduce the average loss by a factor equal to $4$ and the maximum loss by a factor equal to $2$ as $n$ grows and $\epsilon$ goes to $0$.  
\end{proof}

Plugging in \Cref{lem:gcsub-gccsub} into \Cref{thm:fjr-exact}, we get the following.

\begin{restatable}{corollary}{fjrgc}\label{cor:fjr-gc}
    The (efficient) \gc algorithm is guaranteed to return a clustering that is $4$-FJR (resp., $2$-FJR) for the average (resp., maximum) loss.  
\end{restatable}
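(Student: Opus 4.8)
The plan is to obtain \Cref{cor:fjr-gc} by simply composing two results already established: \Cref{thm:fjr-exact}, which guarantees that feeding any $\alpha$-approximation algorithm for the \gccsub problem into \gcc yields an $\alpha$-FJR clustering, and \Cref{lem:gcsub-gccsub}, which shows that \gcsub is exactly such an approximation algorithm, with ratio $4$ for the average loss and $2$ for the maximum loss. Since the efficient \gc algorithm is by definition the instantiation \gcc{}(\gcsub), chaining these two facts is essentially the entire argument.

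Concretely, I would proceed in three steps. First, recall that for the average (resp.\ maximum) loss the loss functions $\ell_i$ are fully determined by the given metric $d$, so running \gcc in its metric form with subroutine \gcsub coincides with running the generalized (loss-function) form of \gcc from \Cref{thm:fjr-exact} with $\alg = \gcsub$; hence the hypotheses of that theorem are met once we know \gcsub approximates \gccsub well. Second, invoke \Cref{lem:gcsub-gccsub}: on any subset $N' \subseteq N$ and any threshold $\tau$, the cluster $S'$ returned by \gcsub satisfies $\max_{i \in S'} \ell_i(S') \le \lambda \cdot \min_{S : |S| \ge \tau} \max_{i \in S} \ell_i(S)$ with $\lambda = 4$ (resp.\ $\lambda = 2$); in particular this holds for the threshold $\tau = \ceil{\sfrac{n}{k}}$ that \gcc passes to \gcsub. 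Third, apply \Cref{thm:fjr-exact} with this $\lambda$ to conclude that \gc, being \gcc{}(\gcsub), returns a $4$-FJR (resp.\ $2$-FJR) clustering.

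Finally, I would address the word ``efficient'' in the statement: each invocation of \gcsub runs in $O(n)$ time and \gcc makes at most $k$ of them (as already argued in the proof of \Cref{thm:core-ub}), so the whole procedure runs in $O(kn)$ time, unlike the brute-force exact \gccsub solver required for $1$-FJR. I do not anticipate a genuine obstacle here: all the substantive work lives in \Cref{lem:gcsub-gccsub} (the triangle-inequality bound on the returned cluster's radius together with the averaging lower bound on the optimum) and in \Cref{thm:fjr-exact} (the ``first captured agent'' argument). The only point demanding a moment's care is bookkeeping --- verifying that the quantity \gcsub is guaranteed to approximate, namely $\max_{i \in S} \ell_i(S)$ over clusters of size at least $\ceil{\sfrac{n}{k}}$, is precisely the quantity that the proof of \Cref{thm:fjr-exact} compares against $\min_{j \in S}\ell_j(C(j))$ via the first captured agent $i^*$ --- which it is.
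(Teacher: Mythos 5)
Your proposal is correct and matches the paper's own derivation exactly: the corollary is obtained by plugging \Cref{lem:gcsub-gccsub} into \Cref{thm:fjr-exact}, precisely the composition you describe. The extra bookkeeping you supply (the threshold $\tau = \ceil{\sfrac{n}{k}}$, the equivalence of the metric and loss-function forms of \gcc, and the $O(kn)$ running time) is all consistent with what the paper establishes elsewhere.
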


Determining the best FJR approximation achievable in polynomial time remains an open question. 

\begin{openq}
    For the average (or maximum) loss, what is the smallest $\alpha$ for which an $\alpha$-FJR clustering can be computed in polynomial time, assuming P $\neq$ NP?
\end{openq}

Also, while \Cref{thm:fjr-exact} shows that exact FJR is achievable for the average and maximum losses, a single clustering may not achieve FJR for both losses simultaneously (the algorithm used in \Cref{thm:fjr-exact} depends on the loss function). In contrast, \gc does not depend on whether we are using the average or the maximum loss. Thus, the clustering it produces is simultaneously $4$-FJR for the average loss and $2$-FJR for the maximum loss (\Cref{cor:fjr-gc}); this is novel even as an existential result, ignoring the fact that it can be achieved using an efficient algorithm \gc. We do not know how much this existential result can be improved upon. 

\begin{openq}
    What is the smallest $\alpha$ such that there always exists a clustering that is simultaneously $\alpha$-FJR for both the average loss and the maximum loss?
\end{openq}
    
\subsection{Auditing FJR}\label{sec:audit}

Next, we turn to the question of auditing the FJR approximation of a given clustering. In particular, the goal is to find the maximum FJR  violation of a clustering  $C$, i.e., the largest $\alpha$ for which there exists a group of agents of size at least $\sfrac{n}{k}$ such that, if they were to form their own cluster, the loss of each of them would be lower, by a factor of at least $\alpha$,  than the minimum loss of any of them under clustering $C$. Because guarantees such as the core and FJR are defined with exponentially many constraints, it is difficult to determine the exact approximation ratio achieved by a given solution efficiently, which is why prior work has not studied auditing for proportional fairness guarantees. Nonetheless, we show that the same ideas that we used to \emph{find} an (approximately) FJR clustering can also be used to (approximately) audit the FJR approximation of any given clustering. 

\begin{definition}[$\lambda$-Approximate FJR Auditing]
We say that algorithm $\alg$ is a $\lambda$-approximate FJR auditing algorithm if, given any clustering $C$, it returns $\theta$ such that the exact FJR approximation of $C$ (i.e., the smallest $\alpha$ such that $C$ is $\alpha$-FJR) is in $[\theta,\lambda \cdot \theta]$.
\end{definition}

\begin{algorithm}[h]
\caption{\auditfjr{}$(\alg)$}\label{alg:audit-FJR}
\KwIn{Set of agents $N$, metric $d$, number of clusters $k$, clustering $C$}
\KwOut{Estimate $\theta$ of the FJR approximation of $C$}
$N' \gets N;  \theta \gets 0$\tcp*{Remaining agents, current FJR apx estimate}
\While{$|N'|\geq  \sfrac{n}{k}$}{
    $S \gets \alg(N',d,\sfrac{n}{k})$\tcp*{Find a cohesive group $S$}
    $\theta \gets \max\set{\theta,  \frac{\min_{i\in S} \ell_i(C(i))}{\max_{i\in S}\ell_i(S)}}$\tcp*{Update $\theta$ using the FJR violation due to $S$}
    $i^*\gets \argmin_{i\in S} \ell_i(C(i)) $\;
    $N' \gets N' \setminus \set{i^*}$\tcp*{Remove the agent with the smallest current loss}
}
\Return $\theta$\;
\end{algorithm}

We design another parametric algorithm, \auditfjr{}$(\alg)$, presented as \Cref{alg:audit-FJR}, which iteratively calls $\alg$ to find a `cohesive' cluster $S$, similarly to \gcc. But while \gcc removes all the agents in $S$ from further consideration, \auditfjr removes only the agent in $S$ with the smallest loss under the given clustering $C$ from further consideration. Thus, instead of finding up to $k$ cohesive clusters, it finds up to $n$ cohesive clusters. It returns the maximum FJR violation of $C$ demonstrated by any of these $n$ possible deviating coalitions (recall that the exact FJR approximation of $C$ is the maximum FJR violation across all the exponentially many possible deviating coalitions of size at least $\ceil{\sfrac{n}{k}}$). 

We show that if $\alg$ was a $\lambda$-approximation algorithm for the \gccsub problem, then the resulting algorithm is a $\lambda$-approximate FJR auditing algorithm. In particular, if we were to solve the \gccsub problem exactly in each iteration (which would be inefficient), the maximum FJR violation across those $n$ cohesive clusters found would indeed be the maximum FJR violation across all the exponentially many deviating coalitions, an apriori nontrivial insight. Fortunately, we can at least plug in the \gcsub algorithm, which we know achieves constant approximation to the \gccsub problem (\Cref{lem:gcsub-gccsub}).

\begin{restatable}{theorem}{thmauditfjr}\label{thm:audit-fjr}
    For $\lambda \ge 1$, if $\alg$ is a $\lambda$-approximation algorithm to the \gccsub problem, then \auditfjr{}$(\alg)$ is a $\lambda$-approximate FJR auditing algorithm. Given \Cref{lem:gcsub-gccsub}, it follows that for the average (resp., maximum) loss, \auditfjr{}(\gcsub) is an efficient $4$-approximate (resp., $2$-approximate) FJR auditing algorithm. 
\end{restatable}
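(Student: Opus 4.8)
The plan is to prove the two inequalities that pin the exact FJR approximation $\alpha^*$ of $C$ into the interval $[\theta, \lambda\theta]$, where $\theta$ is the output of \auditfjr{}$(\alg)$. Write $\alpha^*$ for the largest ratio $\frac{\min_{j\in T}\ell_j(C(j))}{\max_{i\in T}\ell_i(T)}$ over all groups $T\subseteq N$ with $|T|\ge\ceil{\sfrac nk}$ (this is exactly the smallest $\alpha$ for which $C$ is $\alpha$-FJR). The lower bound $\theta\le\alpha^*$ is immediate: every candidate group $S$ produced inside the while-loop is itself a group of size $\ge\ceil{\sfrac nk}$, so the ratio $\frac{\min_{i\in S}\ell_i(C(i))}{\max_{i\in S}\ell_i(S)}$ used to update $\theta$ is one of the ratios over which $\alpha^*$ is the maximum; hence the running maximum $\theta$ never exceeds $\alpha^*$.

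The substantive direction is $\alpha^*\le\lambda\theta$. Fix a group $T^*$ achieving $\alpha^*$, and let $i^\star=\argmin_{j\in T^*}\ell_j(C(j))$, so $\min_{j\in T^*}\ell_j(C(j))=\ell_{i^\star}(C(i^\star))$. The key observation is that \auditfjr removes agents one at a time, always the agent of minimum $C$-loss within the current cohesive cluster it found; I claim that at the first iteration at which some member of $T^*$ is removed, we can extract the desired bound. Consider that iteration: the remaining set $N'$ still contains all of $T^*$ (since no member of $T^*$ was removed in earlier iterations), and $\alg$ is called on $N'$ with threshold $\ceil{\sfrac nk}$, returning a cluster $S$ with $\max_{i\in S}\ell_i(S)\le\lambda\cdot\max_{i\in T}\ell_i(T)$ for every $T\subseteq N'$ with $|T|\ge\ceil{\sfrac nk}$ — in particular for $T=T^*$, giving $\max_{i\in S}\ell_i(S)\le\lambda\cdot\max_{i\in T^*}\ell_i(T^*)$. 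Moreover, the agent removed at this iteration is $\argmin_{i\in S}\ell_i(C(i))$, and by assumption this agent lies in $T^*$ (this is the first iteration touching $T^*$, and the removed agent is the only one removed), hence $\min_{i\in S}\ell_i(C(i))=\ell_{i'}(C(i'))\ge\ell_{i^\star}(C(i^\star))$ for the removed agent $i'$? That is not automatic — so the argument needs a small fix, described next.

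The fix is to argue about \emph{when} a member of $T^*$ gets removed rather than \emph{which} one. Let $j\in T^*$ be the first member of $T^*$ removed, at some iteration with remaining set $N'\supseteq T^*$ and cohesive cluster $S$; then $j=\argmin_{i\in S}\ell_i(C(i))$, so $\ell_j(C(j))\le\ell_i(C(i))$ for all $i\in S$, i.e. $\ell_j(C(j))=\min_{i\in S}\ell_i(C(i))$. Now before this iteration, every member of $T^*$ is still present, so at the iteration where $j$ is removed we have both $T^*\subseteq N'$ and $j\in T^*$. Since $j\in S$ and $\ell_j(C(j))=\min_{i\in S}\ell_i(C(i))$, and since $j\in T^*$ gives $\ell_j(C(j))\ge\min_{i\in T^*}\ell_i(C(i))=\ell_{i^\star}(C(i^\star))$, we obtain $\theta\ge\frac{\min_{i\in S}\ell_i(C(i))}{\max_{i\in S}\ell_i(S)}=\frac{\ell_j(C(j))}{\max_{i\in S}\ell_i(S)}\ge\frac{\ell_{i^\star}(C(i^\star))}{\lambda\cdot\max_{i\in T^*}\ell_i(T^*)}=\frac{\alpha^*}{\lambda}$, which rearranges to $\alpha^*\le\lambda\theta$. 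The main obstacle is exactly this bookkeeping: making precise that "the first member of $T^*$ removed is the minimizer of the cluster that caused its removal'' and that $T^*$ is still fully available at that moment (so the $\lambda$-approximation guarantee of $\alg$ applies with witness $T^*$); once that is set up, combining it with the trivial direction gives $\alpha^*\in[\theta,\lambda\theta]$. Finally, the concrete instantiation for the average and maximum losses follows by plugging $\alg=\gcsub$ and $\lambda=4$ (resp.\ $\lambda=2$) from \Cref{lem:gcsub-gccsub}, and efficiency follows since \gcsub runs in polynomial time and the while-loop executes at most $n$ times.
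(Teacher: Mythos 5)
Your proposal is correct and follows essentially the same route as the paper's proof: the trivial direction $\theta\le\alpha^*$, and for the other direction, identifying the first member of a deviating group to be removed, using that this member is the $C$-loss minimizer of the cluster $S$ that triggered its removal, and invoking the $\lambda$-approximation guarantee of $\alg$ with the still-intact deviating group as witness. The only difference is cosmetic (the paper bounds the ratio for an arbitrary group $S$ rather than fixing a maximizer $T^*$, which also sidesteps the false start you correct mid-argument), so no further comparison is needed.
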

\begin{proof}
    Suppose $\alg$ is a $\lambda$-approximation algorithm for the \gccsub problem. Consider any clustering $C$ on which \auditfjr{}$(\alg)$ returns $\theta$. Let $\rho = \max_{S \subseteq N : |S| \ge \ceil{\sfrac{n}{k}}} \frac{\min_{i \in S} \ell_i(C(i))}{\max_{i \in S} \ell_i(S)}$ be the exact FJR approximation of $C$. First, it is easy to check that $\rho \ge \theta$ because $\theta$ is computed by taking the maximum of the same expression as $\rho$ is, but over only some (instead of all possible) $S$. Hence, it remains to prove that $\rho \le \lambda \cdot \theta$.

    Consider any group $S \subseteq N$ with $|S|\geq \sfrac{n}{k}$. Let $i^*$ be the first agent in $S$ that was removed by \auditfjr, say when $\alg$ returned a group $S'$ containing it; there must be one such agent because $|S| \ge \sfrac{n}{k}$ and when \auditfjr stops, fewer than $\sfrac{n}{k}$ agents remain in $N'$. Now, we have that
    \[
    \frac{\min_{i\in S} \ell_i(C(i))}{\max_{i\in S}\ell_i(S)}
    \leq \frac{\ell_{i^*}(C(i^*))}{\max_{i\in S}\ell_i(S)}
    \leq \lambda \cdot \frac{\ell_{i^*}(C(i^*))}{\max_{i\in S'} \ell_i(S')}
    = \lambda \cdot \frac{\min_{i \in S'} \ell_i(C(i))}{\max_{i\in S'} \ell_i(S')}
    \leq \lambda \cdot \theta,
    \]
    where the second inequality holds because $\alg$ is a $\lambda$-approximation algorithm for the \gccsub problem, which implies $\max_{i \in S'} \ell_i(S') \le \lambda \cdot \max_{i \in S} \ell_i(S)$; the next equality holds because agent $i^*$ was selected for removal when $S'$ was returned, which implies $i^* \in \argmin_{i \in S'} \ell_i(C(i))$; and the final inequality holds because $\theta$ is  updated to be the maximum of all FJR violations witnessed by the algorithm, and violation due to $S'$ is one of them.   

Finally, using the approximation ratio bound of \gcsub for the \gccsub problem from \Cref{lem:gcsub-gccsub}, we obtain the desired approximate auditing guarantee of \auditfjr{}(\gcsub).
\end{proof}

Unfortunately, the technique from \Cref{thm:audit-fjr} does not extend to auditing the core. This is because it requires upper bounding $\min_{i \in S} \frac{\ell_i(C(i))}{\ell_i(S)}$ (instead of $\frac{\min_{i\in S} \ell_i(C(i))}{\max_{i\in S}\ell_i(S)}$); this can be upper bounded by $\frac{\ell_{i^*}(C(i^*))}{\ell_{i^*}(S)}$, but we cannot lower bound $\ell_{i^*}(S)$. The fact that $\alg$ approximates the \gccsub problem only lets us lower bound $\max_{i \in S} \ell_i(S)$. We leave it as an open question whether an efficient, constant-approximate core auditing algorithm can be devised.
\begin{openq}
    Does there exist a polynomial-time, $\alpha$-approximate core auditing algorithm for some constant $\alpha$?
\end{openq}

For the maximum loss, we can show that our $2$-approximate FJR auditing algorithm is the best one can hope for in polynomial time. The case of average loss remains open. 
\begin{restatable}{theorem}{auditfjrhardness}\label{thm:audit-fjr-hardness}
Assuming P $\neq$ NP, there does not exist a polynomial-time $\lambda$-approximate FJR auditing algorithm for the maximum loss, for any $\lambda < 2$. 
\end{restatable}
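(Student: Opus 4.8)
The plan is to reduce from an NP-hard problem about cliques, mirroring the reduction sketched in Footnote~\ref{fnt:reduction}. Specifically, I would start from the NP-hardness of deciding, given a graph $G = (V,E)$ and an integer $t$, whether $G$ contains a clique of size at least $t$ (this is NP-complete). The idea is to build a maximum-loss clustering instance together with a fixed clustering $C$ so that the exact FJR approximation of $C$ is large (at least $2$, or some value bounded away from $1$) if $G$ has a $t$-clique, and is exactly $1$ (or close to it) otherwise. A $\lambda$-approximate FJR auditing algorithm with $\lambda < 2$ returns $\theta$ with the true value in $[\theta, \lambda\theta]$, so it would let us distinguish these two cases in polynomial time, contradicting P $\neq$ NP.

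Concretely, I would take the agent set to be $V$ with $k = n/t$ so that $\sfrac{n}{k} = t$, and set $d(i,j) = 1$ if $\{i,j\} \in E$ and $d(i,j) = 2$ otherwise (this is a metric since all distances are in $\{1,2\}$ and $i=j$ gives $0$). As in the footnote, a cluster $S$ with $|S| \ge t$ achieving $\max_{i \in S}\ell_i(S) = 1$ exists if and only if $G$ has a clique of size $\ge t$; otherwise every such $S$ has $\max_{i \in S}\ell_i(S) = 2$. The remaining design task is to choose the clustering $C$ being audited so that $\min_{i \in S}\ell_i(C(i)) = 2$ for the relevant deviating coalitions — for instance, by arranging $C$ so that every agent has maximum loss exactly $2$ under $C$ (e.g., every cluster of $C$ contains at least one non-neighbor of each of its members, which is easy to force by a suitable partition, or by padding the instance with extra ``far'' agents at distance $2$ from everyone). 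Then the FJR violation of the best deviating $S$ is $2/1 = 2$ in the clique case and $2/2 = 1$ in the no-clique case, giving a gap of exactly $2$.

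I expect the main obstacle to be the construction of the audited clustering $C$ so that the ``numerator'' $\min_{i \in S}\ell_i(C(i))$ is controlled uniformly: we need it to be (at least) $2$ for \emph{every} candidate deviating coalition $S$ in the clique case, while not inadvertently creating a large FJR violation of $C$ in the no-clique case that would collapse the gap. Padding with auxiliary agents that are at distance $2$ from all original agents (and at distance $2$ from each other) and distributing them across the clusters of $C$ so that each cluster has a member at distance $2$ from everyone should handle this, but I would need to recheck that adding these agents does not change $k$ or the threshold $\sfrac{n}{k}$ in a way that breaks the clique equivalence — this bookkeeping, and verifying that no deviating coalition involving the padding agents does better than the intended gap of $2$, is the delicate part. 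Once $C$ is pinned down, the two-case analysis is immediate: a $\lambda$-approximate auditor with $\lambda < 2$ outputs $\theta \ge 1$ in the no-clique case and $\theta \ge 2/\lambda > 1$ in the clique case, and since $\theta \le \rho$ with $\rho \in \{1,2\}$, the value $\theta$ (or $\lambda\theta$) separates the cases, deciding clique existence in polynomial time.
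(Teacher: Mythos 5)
Your reduction is essentially the paper's: both reduce from CLIQUE via a maximum-loss instance in which every agent's loss under the audited clustering $C$ equals $2$, so that the exact FJR approximation of $C$ is $2$ exactly when a $t$-clique exists and $1$ otherwise, a gap that any $\lambda$-approximate auditor with $\lambda<2$ separates. The ``delicate'' padding step you flag does go through, and once the counts are pinned down (one real agent plus $t-1$ auxiliary agents per cluster, $k=|V|$, so the deviation threshold is $\sfrac{n}{k}=t$) it coincides with the paper's dummy-node construction, except that the paper uses shortest-path distances in the augmented graph rather than uniform distance $2$, which costs it a short extra case analysis for coalitions containing dummy nodes that your uniform-distance padding avoids.
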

\begin{proof}
We show that such an algorithm can be used to solve the CLIQUE problem, which asks whether a given undirected graph $G = (V,E)$ admits a clique of size at least $t$. The problem remains hard with $t \ge 3$, so we can assume this without loss of generality. Given $(G,t)$, we first modify $G = (V,E)$ into $G' = (V',E')$ as follows. To each $v \in V$, we attach $t-2$ new (dummy) nodes, and to one of those dummy nodes, we attach yet another dummy node. In total, for each $v \in V$, we are adding $t-1$ dummy nodes, so the final number of nodes is $|V'| = |V| \cdot t$. 

Next, we create an instance of non-centroid clustering with $n = |V'|$ agents, one for each $v \in V'$. The distance $d(u,v)$ is set as the length of the shortest path between $u$ and $v$. Set $k = |V|$. 

Consider a clustering $C$ in which each real node $v \in V$ is put into a separate cluster, along with the $t-1$ dummy nodes created for it. Note that $\ell_v(C(v)) = 2$ for each real node $v \in V$ (due to the dummy node attached to a dummy node attached to $v$) and $\ell_v(C(v)) \in \set{2,3}$ for each dummy node $v \in V' \setminus V$. Let us now consider possible deviating coalitions $S$. 

If a dummy node $v$ is included in $S$, then in order for an FJR violation, its maximum loss would have to be strictly reduced. If $\ell_v(C(v)) = 2$, then we must have $\ell_v(S) = 1$, but no dummy node has at least $t \ge 3$ nodes within a distance of $1$. If $\ell_v(C(v)) = 3$, then in order to find at least $t$ nodes within a distance of at most $2$ (and the set not be identical to one of the clusters), $S$ must include at least one real node $v'$ that is not associated with the dummy node $v$. However, in this case, $\ell_{v'}(S) \ge 2$ whereas $\ell_{v'}(C(v')) = 2$, so no FJR violation is possible. 

The only remaining case is when $S$ consists entirely of real nodes. Since $\ell_v(C(v)) = 2$ for every real node $v \in V$, an FJR violation exists if an only if $\max_{v \in S} \ell_v(S) = 1$, which happens if and only if $S$ is a clique of real nodes size at least $t$. 

Thus, we have established that the FJR approximation of $C$ is $2$ if there exists a clique of size at least $t$ in $G$, and $1$ otherwise. Since a $\lambda$-approximate auditing algorithm with $\lambda < 2$ can distinguish between these two possibilities, it can be used to solve the CLIQUE problem. \end{proof}

\section{Experiments}\label{sec:expt}

In this section, we empirically compare \gc with the popular clustering algorithms $k$-means++ and $k$-medoids on real data.  Our focus is on the tradeoff between fairness (measured by the core and FJR) and accuracy (measured by traditional clustering objectives) they achieve. 

\paragraph{Datasets.} We consider three different datasets from the UCI Machine Learning Repository~\cite{DG17}, namely Census Income,  Diabetes, and Iris.  For the first two datasets, each data point corresponds to a human being, and it is reasonable to assume that each individual prefers to be clustered along with other similar individuals. We also consider the third dataset for an interesting comparison with the empirical work of~\citet{chen2019proportionally}, who compared the same algorithms but for centroid clustering.

The Census Income dataset contains demographic and economic characteristics of individuals, which are used to predict whether an individual's annual income exceeds a threshold. For our experiments, we keep all the numerical features (i.e. \texttt{age},
\texttt{education-num}, \texttt{capital-gain}, \texttt{capital-loss}, and \texttt{hours-per-week})  along with \texttt{sex}, encoded as binary values. There are in total $32{,}561$ data points, each with a sample weight attribute (\texttt{fnlwgt}). The Diabetes dataset contains numerical features, such as age and blood pressure, for about 768 diabetes patients. The Iris dataset consists of 150 records of numerical features related to the petal dimensions of different types of iris flowers. 

\paragraph{Measures.}  For fairness, we measure the true FJR and core approximations of each algorithm with respect to both the average and maximum losses. For accuracy, we use three traditional clustering objectives: the average within-cluster distance $\sum_{t\in [k]} \frac{1}{|C_t|} \cdot  \sum_{i,j \in C_t} d(i,j)$, termed \emph{cost} by \citet{ahmadi2022individual}, as well as the popular $k$-means and $k$-medoids objectives. 

\paragraph{Experimental setup.} We implement the standard $k$-means++ and $k$-medoids clustering algorithms from the Scikit-learn project\footnote{https://scikit-learn.org}, averaging the values for each measure over $20$ runs, as their outcomes depend on random initializations. The computation of \gc neither uses randomization nor depends on the loss function with respect to which the core or FJR approximation is measured. Since calculating core and FJR approximations requires considerable time, for both the Census Income and Diabetes datasets, we sample $100$ data points and plot the means and standard deviations over $40$ runs. For the former, we conduct weighted sampling according to the \texttt{fnlwgt} feature. We conducted our experiments on a server with 32 cores / 64 threads at 4.2 GHz and 128 GB of RAM.

\begin{figure}[htb!]
    \centering
    \begin{subfigure}[b]{0.45\textwidth}
        \caption{Core violation, average loss}
        \includegraphics[width=\textwidth]{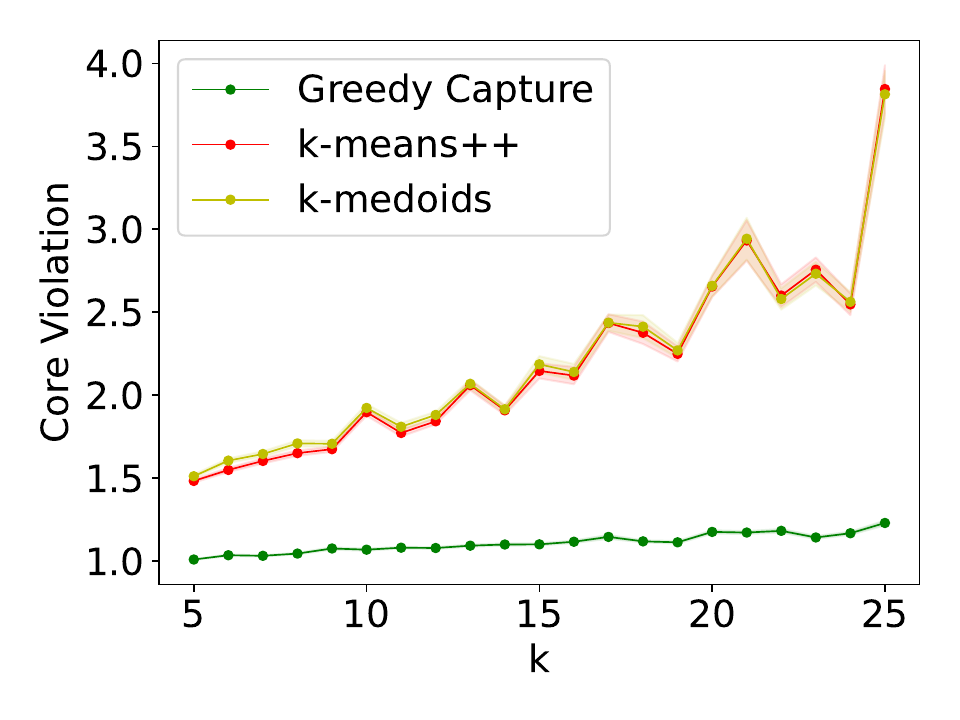}
    \end{subfigure}\hspace{0.02\textwidth}%
    \begin{subfigure}[b]{0.45\textwidth}
        \caption{FJR violation, average loss}
        \includegraphics[width=\textwidth]{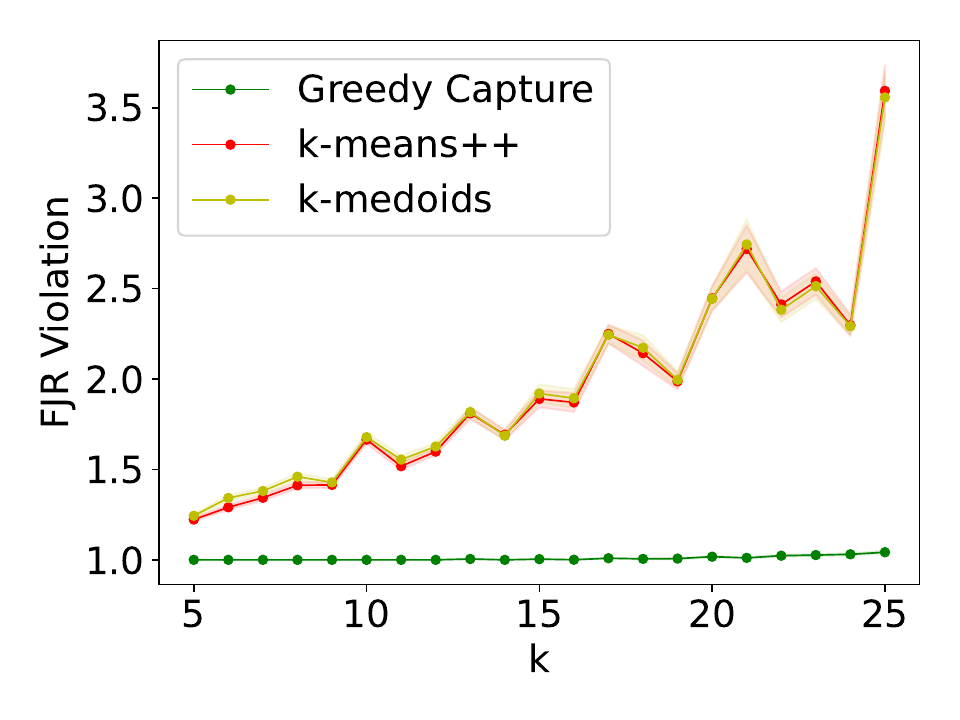}
    \end{subfigure}
    
    \begin{subfigure}[b]{0.45\textwidth}
        \caption{Core violation, maximum loss}
        \includegraphics[width=\textwidth]{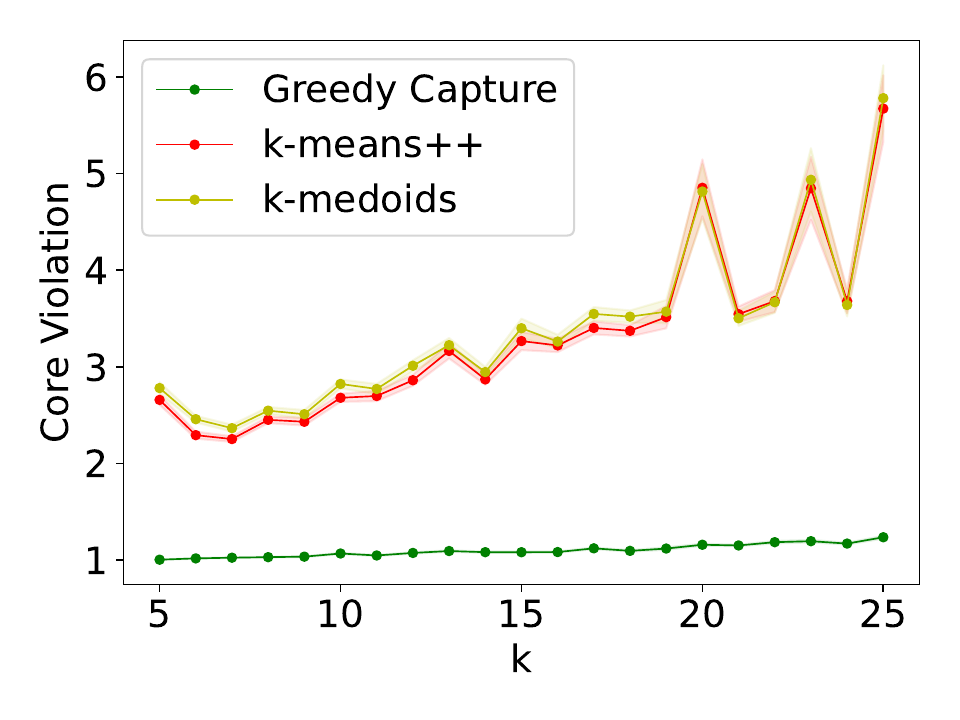}
    \end{subfigure}\hspace{0.02\textwidth}%
    \begin{subfigure}[b]{0.45\textwidth}
        \caption{FJR violation, maximum loss}
        \includegraphics[width=\textwidth]{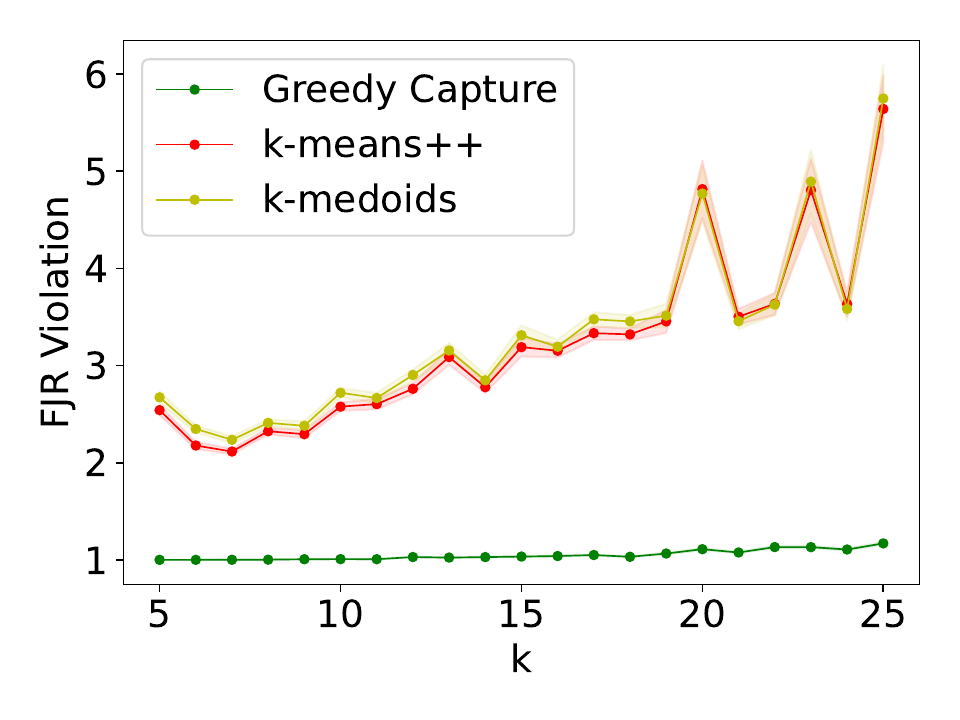}
    \end{subfigure}
    
    \begin{subfigure}[b]{0.33\textwidth}
        \caption{Avg within-cluster distance}
        \includegraphics[width=\textwidth]{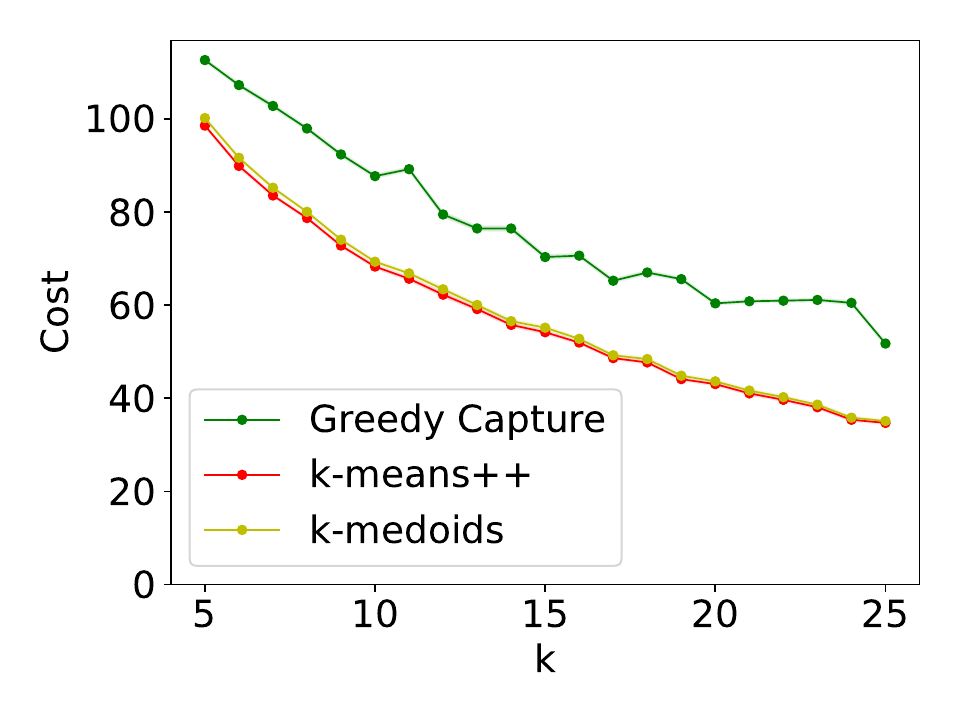}
    \end{subfigure}\hspace{0.003\textwidth}%
    \begin{subfigure}[b]{0.33\textwidth}
        \caption{$k$-means objective}
        \includegraphics[width=\textwidth]{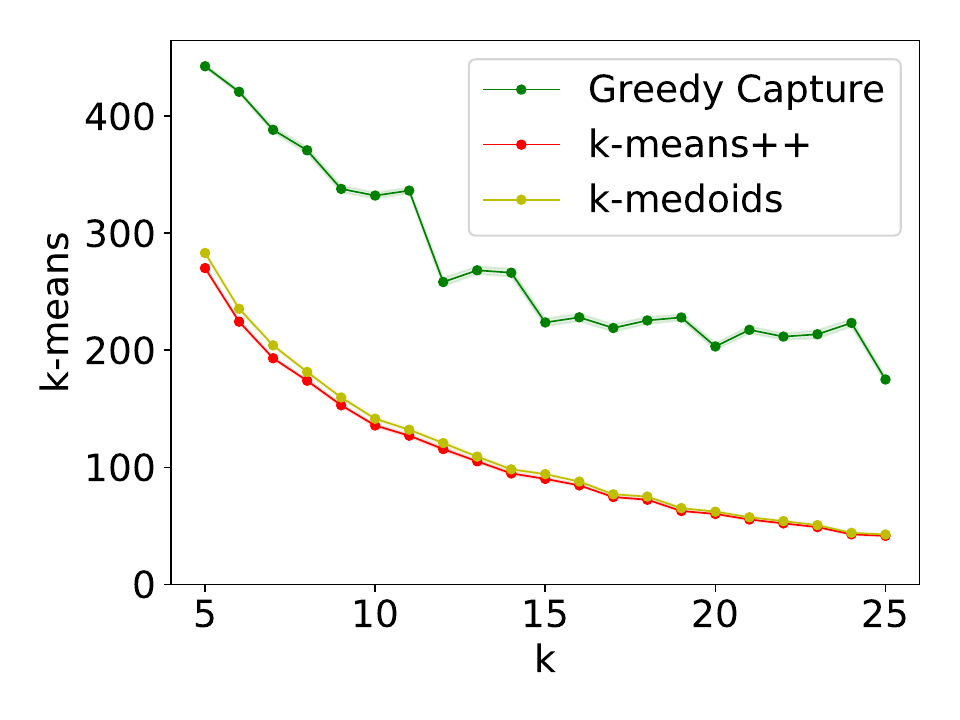}
    \end{subfigure}\hspace{0.003\textwidth}%
    \begin{subfigure}[b]{0.33\textwidth}
    \caption{$k$-medoids objective}
        \includegraphics[width=\textwidth]{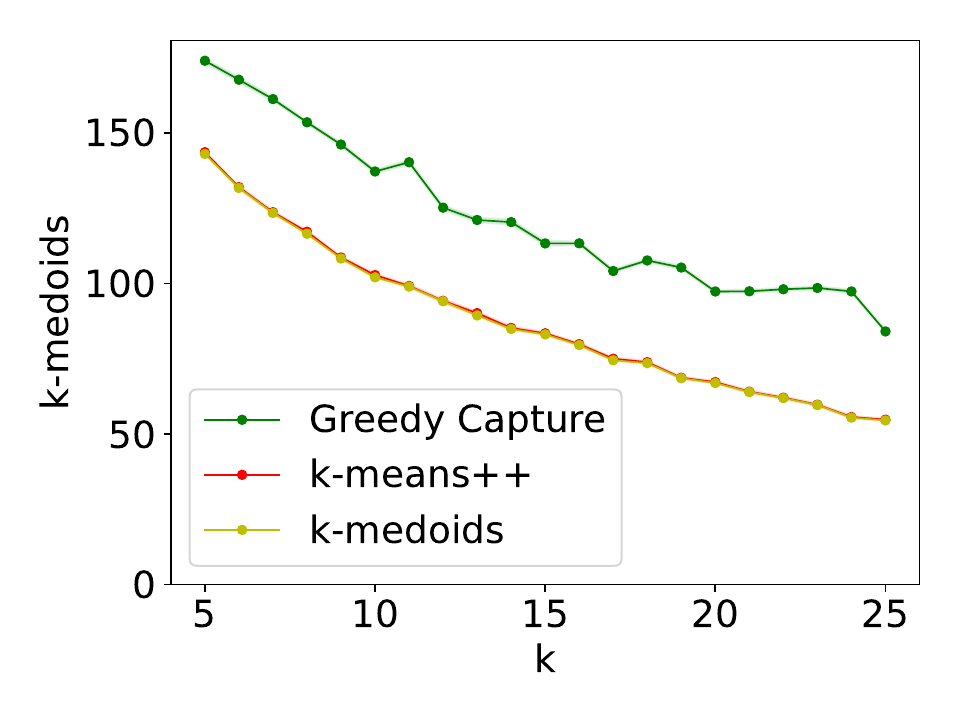}
    \end{subfigure}

    \caption{Census Income Dataset}
    \label{fig:adult-aver-core-fjr}
\end{figure}

\paragraph{Results.} In~\Cref{fig:adult-aver-core-fjr}, we see the results for the Census Income dataset; the results for $k$-means and $k$-medoids objectives for this dataset, along with results for the other two datasets, are relegated to \Cref{app:more-experiments} due to being qualitatively similar to the results presented here. 
According to all four fairness metrics, \gc is significantly fairer than both $k$-means++ and $k$-medoids, consistently across different values of $k$. Notably, the FJR approximation of \gc empirically stays very close to $1$ in all cases, in contrast to the higher worst-case bounds (\Cref{cor:fjr-gc}). Remarkably, the significant fairness advantage of \gc comes at a modest cost in accuracy: all three objective values (average within-cluster distance, $k$-means, and $k$-medoids) achieved by \gc are less than \emph{twice} those of $k$-means++ and $k$-medoids, across all values of $k$ and all three datasets! 

Lastly, our results are in contrast to the experiments of~\citet{chen2019proportionally} for centroid clustering, where \gc provides a worse core approximation than $k$-means++ on Iris and Diabetes datasets; as demonstrated in~\Cref{app:more-experiments}, this is not the case in non-centroid clustering. 

\section{Discussion}\label{sec:discussion}
We have initiated the study of proportional fairness in non-centroid clustering. Throughout the paper, we highlight several intriguing open questions. Probably the most important of these are whether we can achieve a better approximation than $O(n/k)$ of the core for the average loss, and whether the core is always non-empty for the maximum loss. In an effort to answer the latter question, in~\Cref{app:line-max} we show that the core is always non-empty for the maximum loss when the metric space is 1-dimensional (i.e., a line). This contrasts with the average loss, for which the core remains empty even on the line (see~\Cref{app:line-aver}). 

In our work, we have shown that there are remarkable differences between centroid and non-centroid clustering settings. One can consider a more general model, where the loss of an agent depends on both her cluster center and the other agents in her cluster. Investigating what proportional fairness guarantees can be achieved in this case is an exciting direction. Another intriguing question is whether we can choose the number of clusters $k$ intrinsically; this seems challenging as proportional fairness guarantees seem to depend on fixing $k$ in advance to define which coalitions can deviate. Lastly, while classical algorithms such as  $k$-means and $k$-centers are incompatible with the core and FJR in the worst case (see~\Cref{app:class-obje}), it is interesting to explore conditions under which they may be more compatible, and whether a fair clustering can be computed efficiently in such cases.

\section*{Acknowledgements}
Caragiannis was partially supported by the Independent Research Fund Denmark (DFF) under grant 2032-00185B. Shah was partially supported by an NSERC Discovery grant.

\bibliographystyle{unsrtnat}
\bibliography{abb,literature}

\newpage
\appendix
\section*{\centering Appendix}

\section{Bicriteria Approximation of the Core}\label{app:core-bicriteria}

Here, we consider a more general definition of the core.

\begin{definition}[$(\alpha,\delta)$-Core]
    For $\alpha \ge 1$, a $k$-clustering $C=(C_1,\ldots,C_k)$ is said to be in the $(\alpha,\delta)$-core if there is no group of agents $S \subseteq N$ with $|S| \ge \delta \cdot \sfrac{n}{k}$ such that $\alpha \cdot  \ell_i(S) < \ell_i(C(i))$ for all $i \in S$.   
\end{definition} 

\begin{theorem}
  \gc returns a clustering solution in the  $(\delta, \frac{2\delta}{\delta-1})$-core, for any $\delta>1$.
\end{theorem}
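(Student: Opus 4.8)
The plan is to argue by contradiction and recycle the ball-growing analysis behind \Cref{thm:core-ub} for the average loss, which is the case of interest here. Suppose $C=(C_1,\dots,C_k)$ is the clustering returned by \gc and that some $S\subseteq N$ with $|S|\ge\frac{2\delta}{\delta-1}\cdot\frac nk$ deviates, i.e.\ $\delta\cdot\ell_i(S)<\ell_i(C(i))$ for every $i\in S$; the goal is to produce an $i\in S$ for which this fails.

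First I would set $m=\ceil{\sfrac nk}$, let $i^\ast$ be the first agent of $S$ that \gc assigns to a cluster, and freeze the run at the \gcsub call that forms $C(i^\ast)$. At that moment no agent of $S$ has been removed, so $S$ is contained in the current agent set $N'$; the ball used there has some radius $R$ centered at an agent $i'\in C(i^\ast)$, and every member of $C(i^\ast)$ is within distance $R$ of $i'$. Exactly as in \Cref{ineq:core-aver-2}, applying the triangle inequality term by term gives $\ell_{i^\ast}(C(i^\ast))\le\bigl(2-\tfrac3m\bigr)R$. For the matching lower bound, the fact that \gcsub preferred $i'$ over $i^\ast$ forces fewer than $m$ agents of $N'$ --- hence fewer than $m$ agents of $S$ --- to lie strictly within distance $R$ of $i^\ast$, so at least $|S|-m+1$ agents of $S$ sit at distance $\ge R$ and $\ell_{i^\ast}(S)\ge\frac{|S|-m+1}{|S|}\,R$. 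Combining the two estimates with the size bound $|S|\ge\frac{2\delta}{\delta-1}\cdot\frac nk$ and simplifying, the inequality $\ell_{i^\ast}(C(i^\ast))\le\delta\cdot\ell_{i^\ast}(S)$ --- the contradiction we want --- falls out once $\delta$ is bounded away from $1$; in particular it holds immediately for $\delta\ge 3$, and more generally whenever the forced coalition size comfortably exceeds a single cluster. The $O(kn)$ time bound is inherited verbatim from \Cref{thm:core-ub}.

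The step I expect to be the genuine obstacle is the small-$\delta$ regime (equivalently, when the required coalition is very large), where the crude bound $\ell_{i^\ast}(C(i^\ast))\le 2R$ squanders a factor approaching $2$ that a $\delta$ near $1$ cannot absorb, so a single cluster no longer suffices. To obtain the exact threshold $\frac{2\delta}{\delta-1}$ in that regime I would instead sum the deviation inequalities over \emph{all} clusters $C_j$ meeting $S$, obtaining $2\sum_j R_j\,|C_j\cap S|>\frac{\delta}{|S|}\sum_{i,i'\in S}d(i,i')$; then, for each such $C_j$, the greedy choice at the step it is created lower-bounds the total distance from any still-present agent of $S$ to $S$ by roughly $\bigl(\#\{S\text{-agents still present}\}-m\bigr)R_j$, and because the radii $R_j$ are non-decreasing along the run these per-step bounds can be matched against the right-hand side; using $|S|\ge\frac{2\delta}{\delta-1}\cdot\frac nk$ to absorb the $-m$ deficits should then contradict the summed inequality. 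A complementary line I would also pursue is to choose the reference agent more carefully --- e.g.\ the center of the most cohesive $\ceil{\sfrac nk}$-subset of $S$ (what \gcsub would select on input $S$), whose loss within that subset is at most \emph{one} ball radius rather than two --- which, paired with \Cref{lem:gcsub-gccsub}, is precisely the device that trades the stubborn factor $2$ for an FJR-type factor and, after feeding through the size bound, should yield the stated $\frac{2\delta}{\delta-1}$.
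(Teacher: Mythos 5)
Your first paragraph contains exactly the two estimates the paper's own proof uses --- $\ell_{i^*}(C(i^*))\le(2-\tfrac{3}{m})R\le 2R$ and $\ell_{i^*}(S)\ge\frac{|S|-m+1}{|S|}R\ge\frac{|S|-n/k}{|S|}R$ --- but you have swapped the roles of the two parameters, and that swap is what manufactures the ``small-$\delta$ obstacle'' of your second paragraph. In the intended statement (this is what the appendix proof establishes, and what the main text announces: demanding that deviating coalitions have size at least $\delta\cdot\sfrac{n}{k}$ reduces the approximation factor to a constant), $\delta$ is the \emph{coalition-size multiplier}, i.e.\ $|S|\ge\delta\cdot\sfrac{n}{k}$, and the approximation factor being proved is $\frac{2\delta}{\delta-1}$. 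With that assignment your two estimates finish the job in one line: $\frac{|S|-n/k}{|S|}\ge 1-\frac1\delta=\frac{\delta-1}{\delta}$, hence $\ell_{i^*}(S)\ge\frac{\delta-1}{\delta}R$ and $\ell_{i^*}(C(i^*))\le 2R\le\frac{2\delta}{\delta-1}\cdot\ell_{i^*}(S)$, contradicting the assumed deviation. No case split on $\delta$ and none of the extra machinery is needed.

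Under your reading ($\delta$ as the approximation factor, $\frac{2\delta}{\delta-1}$ as the size multiplier) the claim is a genuinely different and, for $\delta<3$, stronger one, and your argument --- as you yourself note --- only closes it for $\delta\ge 3$. The repairs you sketch would not rescue it: the single-ball analysis bounds the improvement ratio by $\frac{(2-3/m)|S|}{|S|-m+1}$, which tends to $2-3/m$ as $|S|\to\infty$; since $m=\ceil{\sfrac{n}{k}}$ can be arbitrarily large, no amount of enlarging the required coalition pushes the achievable factor below $2$ with this technique (the same factor-$2$ barrier as in \Cref{thm:core-ub}), so an approximation factor $\delta<2$ is out of reach for it, and the summation-over-clusters and FJR-style reference-agent ideas are left entirely unexecuted. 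In short: re-read the theorem with $\delta$ as the size multiplier, and your first paragraph \emph{is} the paper's proof.
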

\begin{proof}
    Let $C=\{C_1,\ldots C_k\}$ be a solution that \gc returns. Suppose for contradiction that there exists $S\subseteq N$ with $|S|\geq \delta \cdot n/k$ such that 
    \begin{align*}
        \forall i \in S, \quad \quad \ell_{i} (C(i))>\frac{2\delta}{\delta-1} \cdot \ell_i(S).
    \end{align*}

Let $i^*$ be the agent that was the first among the agents in $S$ that was included in some cluster by the algorithm. Consider the time step before this happens and let $i'\in C(i^*)$ be the agent that had the minimum distance $R$ from the $\lceil \sfrac{n}{k}\rceil$-th agent in $C(i^*)$ among all agents that had not been included to clusters by the algorithm before. With similar arguments as in the proof of~\Cref{thm:core-ub}, we conclude that 
\begin{align*}
    \ell_{i^*}(C(i^*))\leq \left(2-\frac{3}{\lceil \sfrac{n}{k}\rceil}\right)\cdot R\leq 2\cdot  R.
\end{align*}
    
Again, with very similar arguments as in the proof of ~\Cref{thm:core-ub}, we can conclude that  that $S$ contains at least $|S|-\lceil \sfrac{n}{k}\rceil +1$ agents at distance at least $R$ from agent $i^*$. Thus,
\begin{align*}
    \ell_{i^*}(S) &=\frac{1}{|S|}\sum_{i\in S}{d(i^*,i)}\geq \frac{|S|-\lceil \sfrac{n}{k}\rceil+1}{|S|}\cdot R
    \geq \frac{|S|- \sfrac{n}{k}}{|S|}\cdot R
    \geq 
    \frac{\delta-1}{\delta} \cdot R.
\end{align*}
where the second inequality follows since $|S|\geq \delta \cdot  \sfrac{n}{k}$ and the theorem follows. 
\end{proof}

\section{Line}\label{app:line}

\subsection{Non-Emptiness of the Core for Maximum Loss}\label{app:line-max}

\begin{algorithm}	\caption{\sd}\label{alg:smallest-agent-ball-line}
    \KwIn{ $N'\subseteq  N$, metric $d$, $k$, $t$}
    \KwOut{$S$}
 
    \eIf  {$|N'|<t$}{
    $S\gets N'$\;
    }{
       Label the agents from $1$ to $n'$, starting with the leftmost agent and moving to the right\;
       $d_{min}\gets d(1,n')$\;
       $i^*\gets 1$\;
    \For{$i=1$ to $n'-t$}{
        \If{$d(i,i+t)< d_{min}$}{
            $d_{min}\gets d(i,i+t)$\; 
            $i^*\gets i$; 
        }
        }
       }
        $S\gets \{i^*,\ldots, i^*+t\}$\;
\end{algorithm}

\begin{theorem}
    For the maximum loss in the line, \gcc(\sd) returns a solution in the core in $O(kn)$  time complexity.  
\end{theorem}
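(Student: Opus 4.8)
The plan is to show that \gcc(\sd) in the 1-dimensional case produces a clustering in the core (not merely an approximation) for the maximum loss, mimicking the structure of the proof of \Cref{thm:core-ub} but exploiting the line geometry to shave the factor of $2$ down to $1$. The key geometric fact on the line is that the \sd subroutine, when asked to return a block of $t+1$ consecutive agents minimizing the distance between the two endpoints, returns a cluster whose \emph{diameter} (max pairwise distance) equals exactly the distance between its leftmost and rightmost agents; there is no ``doubling'' loss from the triangle inequality because all agents genuinely lie between the two extremes. So if $C(i^*)$ is the cluster built at the step when the first agent $i^*\in S$ is captured, and $R$ is the minimum endpoint-to-endpoint distance over all length-$t$ windows of the remaining agents at that step, then $\ell_{i^*}(C(i^*)) \le R$ rather than $\le 2R$ as in the general metric case.

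First I would fix notation exactly as in \Cref{thm:core-ub}: let $C$ be the output, suppose for contradiction that some $S$ with $|S|\ge \sfrac{n}{k}$ has $\ell_i(C(i)) > \ell_i(S)$ for all $i\in S$ (the $1$-core violation), and let $i^*$ be the first agent of $S$ that \gcc assigns to a cluster, via a call \sd$(N',d,\lceil\sfrac{n}{k}\rceil)$ with $S\subseteq N'$. Second, I would argue the upper bound $\ell_{i^*}(C(i^*)) \le R$: the cluster returned is a window $\{i^*_{\text{left}},\dots,i^*_{\text{left}}+t\}$ of consecutive remaining agents whose endpoint distance is the minimum such $R$; every agent in the window is between the two endpoints on the line, so its distance to any other agent in the window is at most the endpoint distance $R$, hence $\ell_{i^*}(C(i^*)) = \max_{j\in C(i^*)} d(i^*,j)\le R$. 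Third, I would argue the lower bound $\ell_{i^*}(S)\ge R$: since $|S|\ge \lceil\sfrac{n}{k}\rceil$ and all of $S$ is still in $N'$ at this step, $S$ contains $\lceil\sfrac{n}{k}\rceil$ agents; order $N'$ left to right and consider the window of $t+1$ consecutive remaining agents whose extreme members are the leftmost and rightmost agents of $S$ — actually more carefully, the optimality of \sd means no window of $t = \lceil\sfrac{n}{k}\rceil$ remaining agents has endpoint distance below $R$, and in particular the sub-block of $S$ cannot be ``tighter'' than $R$ in the relevant sense, forcing two agents of $S$ to be at distance $\ge R$, one of which is at distance $\ge R/2$...

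The main obstacle — and the step I'd need to be most careful about — is nailing down the lower bound $\ell_{i^*}(S)\ge R$ on the line with the \emph{exact} constant $1$ rather than $1/2$. In a general metric (\Cref{lem:gcsub-gccsub}) one only gets $\max_{i\in S}\ell_i(S)\ge R/2$. On the line, though, the set $S$ of size $\ge t$ spans some interval $[a,b]$; the algorithm considered, among others, a window of $t$ consecutive \emph{remaining} agents sitting inside or overlapping $[a,b]$, and minimality of $R$ should force $b - a \ge R$ — because otherwise there would be $t$ remaining agents (a subset of $S$, or $S$ padded appropriately) packed into an interval of length $< R$, contradicting $i^*$'s window being optimal. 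Then the agent of $S$ at an endpoint of $[a,b]$ has, within $S$, another agent of $S$ at distance $b-a\ge R$, so its max-loss for $S$ is $\ge R$; but I need this to hold for the \emph{particular} agent $i^*$, or else I should instead pick $i^*$ to be an extreme agent of $S$, or re-derive the contradiction using $\min_{j\in S}\ell_j(S)$ together with the fact that \emph{every} agent of $S$ sees one of the two endpoints of $[a,b]$ at distance $\ge (b-a)/2$ — which only gives $1/2$ again. The resolution is that for the \emph{core} we get to use each agent's own cluster loss: I would choose $i^*$ as the first agent of $S$ captured, bound $\ell_{i^*}(C(i^*))\le R$ as above, and separately note that since $S\subseteq N'$ at that moment, the window the algorithm actually chose is at least as tight as the best window covering $t$ agents of $S$; on the line the best such window has endpoint distance equal to the $t$-th smallest gap structure of $S$, and one shows $\ell_{i^*}(S)\ge R$ by observing $i^*\in C(i^*)$ forces $i^*$ to be ``extreme enough'' — this is the delicate case analysis (is $i^*$ inside the span of $S$ or at its boundary?) that the authors presumably handle by the explicit left-to-right labeling in \sd. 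I would structure the final write-up as: (i) the clean upper bound, (ii) a lemma that on the line the \sd window is diameter-optimal among remaining-agent windows, (iii) the lower bound via this optimality, (iv) combine to contradict the core violation, and (v) the $O(kn)$ running time, which is immediate since \sd does a single left-to-right scan costing $O(n')$ and is called at most $k$ times.
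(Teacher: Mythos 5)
There is a genuine gap, and you have correctly located it but not closed it. Your upper bound is fine: on the line the cluster returned by \sd is a contiguous window of the remaining agents, so its diameter equals its endpoint distance $R$ and $\ell_{i^*}(C(i^*))\le R$ with no triangle-inequality doubling. The problem is the lower bound $\ell_{i^*}(S)\ge R$ for the \emph{first-captured} agent $i^*$: this is simply false in general. Optimality of the window only forces the \emph{span} of $S$ to be at least $R$ (since $S\subseteq N'$ supplies $\lceil\sfrac{n}{k}\rceil$ remaining agents inside its span, so some window of remaining agents has endpoint distance at most $\mathrm{span}(S)$); if $i^*$ sits in the middle of that span, then $\ell_{i^*}(S)$ can be as small as $R/2$, and your contradiction evaporates. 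Your suggested fixes (``pick $i^*$ to be an extreme agent of $S$'', or use $\min_j\ell_j(S)$) do not work either: the extreme agent of $S$ need not be the first one captured, and the $\min$ route only recovers FJR-style bounds, not the core.

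The missing idea is to derive the contradiction for a \emph{different} agent of $S$ that is provably also in $C(i^*)$. If $i^*$ strictly improves by deviating to $S$, then, writing $L,R$ for the leftmost/rightmost agents of $S$ and $L^*,R^*$ for those of $C(i^*)$, one must have $d(i^*,L)<d(i^*,L^*)$ or $d(i^*,R)<d(i^*,R^*)$ (otherwise $\max\{d(i^*,L),d(i^*,R)\}\ge\max\{d(i^*,L^*),d(i^*,R^*)\}$). Say the former holds; then $L$ lies between $L^*$ and $i^*$ on the line, and since $C(i^*)$ is a contiguous block of the remaining agents and $L$ has not yet been removed, $L\in C(i^*)$. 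Now the contradiction is reached for $L$: its loss in $C(i^*)$ is at most the window width $d(L^*,R^*)$, which by optimality of \sd is at most $d(L,R')\le d(L,R)=\ell_L(S)$, where $R'$ is the $\lceil\sfrac{n}{k}\rceil$-th remaining agent to the right of $L$. So $L$ does not improve, contradicting that every member of $S$ improves. Without this switch of agent, the factor-$1$ claim cannot be pushed through $i^*$ alone, so as written your plan proves at best a $2$-core bound (no better than \Cref{thm:core-ub}).
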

\begin{proof}
Let $C=\{C_1,\ldots, C_k\}$ be the solution that the algorithm returns.   
 Suppose for contradiction that there exists a group $S \subseteq N$, with $|S|\geq \sfrac{n}{k}$ such that $\ell_i(C(i))> \ell_{i}(S)$ for all $i \in S$. We denote the leftmost and rightmost agents in $S$ by 
$L$ and  $R$, respectively.
Let $i^*$ be  the first agent in $S$  that was assigned to some cluster. If we denote with $N'$ the set of agents that have not been disregarded before this happens, this means that $S\subseteq N'$. We denote the leftmost and rightmost agents in $C(i^*)$ by 
$L^*$ and  $R^*$, respectively.

Note that $i^*$ has incentives to deviate if and only if either $d(i^*,L)<d(i^*,L^*)$ or 
$d(i^*,R)<d(i^*,R^*)$, since otherwise $\ell_{i^*}(S)= \max\{d(i^*,L), d(i^*,R)\}\geq \ell_{i^*}(C(i^*))= \max\{d(i^*,L^*), d(i^*,R^*) \}  $. Without loss of generality, assume that $d(i^*,L)<d(i^*,L^*)$.  
Given the way that the algorithm operates, it is not hard to see that since $L^*$ and $i^*$ are included in $C(i^*)$ and $L$ is located between $L^*$ and $i^*$, then $L$ is also included in $C(i^*)$. Denote with $R'$ the $\ceil{\sfrac{n}{k}}$-th agent to the right of $L$ in $N'$. We notice that  the algorithm returns $C(i^*)$ instead of $S$, because $d(L^*,R^*)\leq d(L,R')\leq  d(L,R)$. But this means that
\[
\ell_L(C(i^L))=\ell_L(C(i^*))\leq d(L^*,R^*) \leq d(L,R)= \ell_L(S)
\]
and we reach in a contradiction. 

Since \gcc calls \sd at most $k$ times and \sd does at most $n$ iterations in each call, we easily see that the time complexity of \gcc(\sd) is $O(kn)$. 
\end{proof}

\subsection{Emptiness of the Core for Average Loss}\label{app:line-aver}

\begin{theorem}
For $k=2$ and the average loss, there exists an instance in the line where the core is empty.
\end{theorem}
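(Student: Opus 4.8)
The plan is to exhibit an explicit instance on the line with $k=2$ and show that every possible $2$-clustering admits a deviating coalition of size at least $n/2$. First I would set up a symmetric configuration of points on the line: the natural candidate is to mimic the structure already used in the paper's lower bounds, with clusters of coincident agents at a few locations placed so that agents on the ``outside'' of a cluster are dragged up by the far agents. Concretely, I would place equal-sized blocks of agents at symmetric positions — say blocks at $-a, -b, b, a$ (or perhaps with an extra block at the center) with multiplicities tuned so that $n/2$ has several essentially different ways of being assembled — and then choose the parameters $a,b$ (and the block sizes) so that in every candidate partition there is a sub-block whose average distance to its assigned cluster strictly exceeds its average distance to an alternative coalition of size $n/2$ formed with agents from the other side.

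The key steps, in order, would be: (1) describe the instance precisely (locations, multiplicities, total $n$, and the value $k=2$); (2) enumerate the structurally distinct $2$-clusterings up to the symmetry of the instance — on the line one can assume each cluster is an interval of the sorted order, which drastically limits the cases, so this is really a small finite case analysis over ``where the cut falls''; (3) for each such clustering, identify a concrete coalition $S$ with $|S| \ge n/2$ and verify that every agent in $S$ has strictly smaller average loss in $S$ than in its current cluster, i.e. check the core-violation inequality $\ell_i(S) < \ell_i(C(i))$ for all $i \in S$; (4) conclude that no clustering is in the core. Since the core is the $1$-core, I only need strict improvement, not a multiplicative factor, which makes the arithmetic lighter than in \Cref{thm:core-avg-lb}.

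I expect step (3) — choosing the instance parameters so that the violating coalition works \emph{simultaneously} against every clustering — to be the main obstacle. The tension is that a partition which is ``good'' for the outer agents (keeping each far-apart pair together is bad, splitting them is good) may be exactly the one that hurts the inner agents, and vice versa; the parameters $a$, $b$, and the block sizes must be balanced so that in each case the harmed agents form a set of size at least $n/2$ and all of them strictly gain. I would likely first try the minimal instance (two or three distinct locations per half), compute the average-loss expressions symbolically as functions of the parameters, and then pick rational values (or take a limit as some gap $\to 0$, as the paper does elsewhere) that make all the required strict inequalities hold; if two locations per half do not suffice I would add a central block or a third location to create the needed slack. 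A secondary subtlety is justifying the reduction to interval clusterings on the line and handling ties at coincident locations, but that is routine given the one-dimensional structure.
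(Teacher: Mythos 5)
There are two genuine gaps here. First, your reduction to interval clusterings is not valid in a core-emptiness argument. The claim quantifies over \emph{all} $2$-partitions, and an adversarial clustering need not respect the sorted order; you must also defeat, for example, the partition that puts the two extreme blocks in one cluster and the two inner blocks in the other. This is not a technicality: for the symmetric family you propose (blocks of $n/4$ agents at $-a,-b,b,a$), a short calculation shows that if $a<3b$ the left/right interval split is in the core, while if $a\ge 3b$ the \emph{non-interval} partition $\set{\pm a\text{-blocks}}, \set{\pm b\text{-blocks}}$ is in the core --- the only size-$n/2$ coalition that could tempt a $\pm b$ agent is one outer block together with its adjacent inner block, which gives the inner agents loss $(a-b)/2\ge b$, no strict improvement. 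So every instance in the symmetric family you describe has a nonempty core, and the correct case analysis must range over how many agents of each block land in each cluster, not over ``where the cut falls.''

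Second, the decisive step --- exhibiting an instance whose parameters defeat every clustering simultaneously --- is exactly what your plan defers to a search, and the search is aimed at the wrong family. The paper's instance is deliberately asymmetric and relies on two ingredients your configuration lacks: an agent at $+\infty$, which forces one of the two clusters to consist entirely of agents with infinite loss (immediately yielding a deviation unless the finite-loss cluster has size at least $n/2$), and a single ``swing'' agent at position $0$ with blocks of $n/2-1$ coincident agents at positions $2$ and $3$. The argument then splits on whether the finite-loss cluster contains at least $n/4$ agents of the far block, and in each case one full block of $n/2-1$ coincident agents plus one extra agent forms a deviating coalition of size exactly $n/2$ whose members all drop from loss at least $1/4$ (or $\infty$) to loss $O(1/n)$. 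Without something playing the roles of the point at infinity and the lone swing agent, the tension you correctly identify --- a partition good for the outer agents is bad for the inner ones, and vice versa --- resolves in favor of \emph{some} partition being stable, as the symmetric example above demonstrates.
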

\begin{proof}
Consider the instance with even $n>24$, where one agent, denoted by $a$, is located at position $0$, $n/2-1$ agents, denoted by the set $S_1$, are located at position $2$,  $n/2-1$ agents, denoted by the set $S_2$, are located at position $3$ and the last  agent denoted by $b$ is located at position $+\infty$.   

    Let  $C=(C_1,C_2)$ be any  clustering solution. Without loss of generality, suppose that $C_1$ contains $b$.  This means that all the agents that are part of $C_1$ have loss equal to infinity. Note that if $|C_2|\leq n/2-1$, then $|C_1\cap (S_1\cup S_2 \cup \{a\} ) |\geq  n/2+1$ which means that $n/k$ agents from  $S_1\cup S_2 \cup \{a\}$ could reduce their loss arbitrary much by deviating to their own cluster. Hence, $|C_2|\geq n/2$. Next, we distinguish to two cases:
    
\paragraph{Case I: $|C_2\cap S_2|\geq n/4$. }
In this case, for each agent $i$ in $S_1$, we have that
$\ell_i(C(i))\geq \frac{n/4}{n-1}\geq 1/4$. Moreover, note that $a$ always prefers to be in a cluster that consists by agents in $S_1$. Therefore, we have that  if $a$ and $S_1$ deviate to their own cluster, then  for each $i\in S_1$ $\ell_i(S_1\cup \{a\})= \frac{1}{n/2}<1/4$, where the last inequality follows from the fact that $n>8$.

\paragraph{Case II: $|C_2\cap S_2|< n/4$. }

Since $|C_2|\geq n/2$, we have that $|C_2\cap S_1|\geq n/4$. In this case, for each agent $i$ in $S_2$, we have that $\ell_i(C(i))\geq \frac{n/4}{n-1}\geq 1/4$.

Now, we distinguish to  two further subcases. First, suppose that $a$ belongs to $C_1$. Then, if the agents in $S_2$ and $a$  deviate to their own cluster,
 we have that  for each $i\in S_2$,   $\ell_i(S_2\cup \{a\})= \frac{3}{n/2}<1/4$, where the last inequality follows from the fact that $n>24$.  If $a$ is assigned to $C_2$, then all the agents in $S_1$ have incentives to deviate with an agent from $S_2$ that is assigned to $C_1$.

\end{proof}

\section{More Experimental Results}\label{app:more-experiments}
Here, we present additional experimental results on the Diabetes and Iris datasets.

\begin{figure}[htb!]
    \centering
    \begin{subfigure}[b]{0.45\textwidth}
    \caption{Core violation, average loss}
        \includegraphics[width=\textwidth]{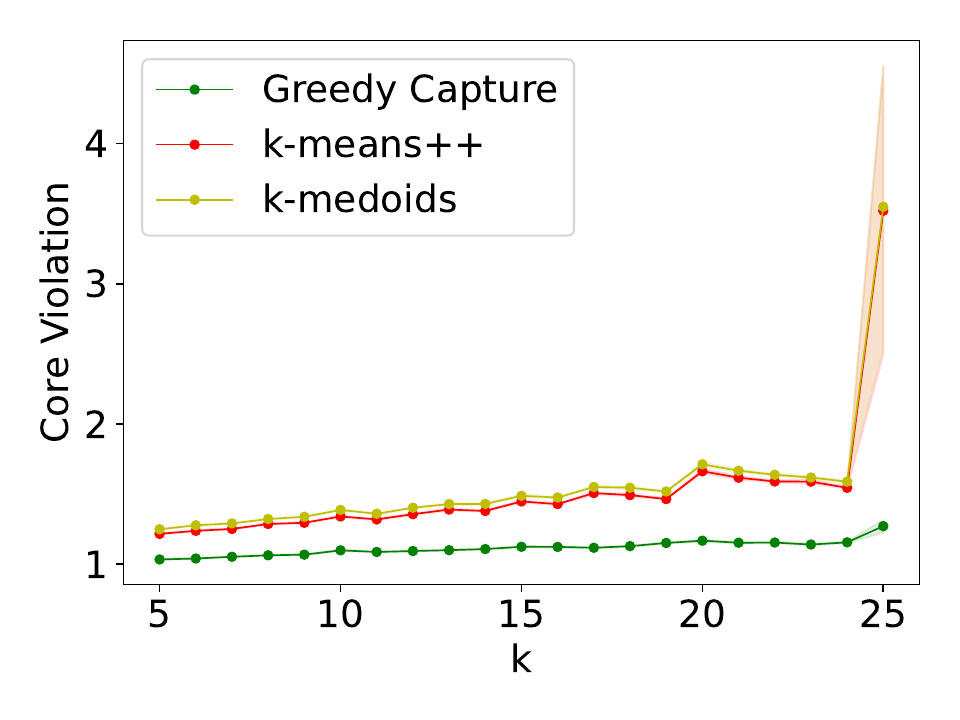}
    \end{subfigure}\hspace{0.02\textwidth}%
   \begin{subfigure}[b]{0.45\textwidth}
    \caption{FJR violation,  average loss}
        \includegraphics[width=\textwidth]{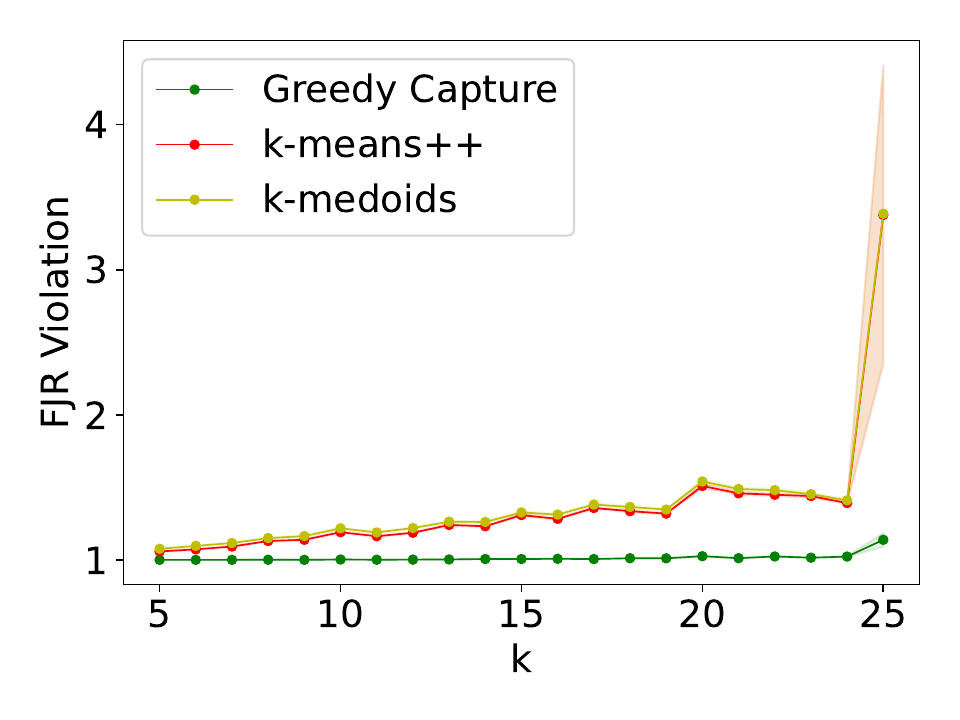}
    \end{subfigure}
    
    \begin{subfigure}[b]{0.45\textwidth}
     \caption{Core violation,  maximum loss}
        \includegraphics[width=\textwidth]{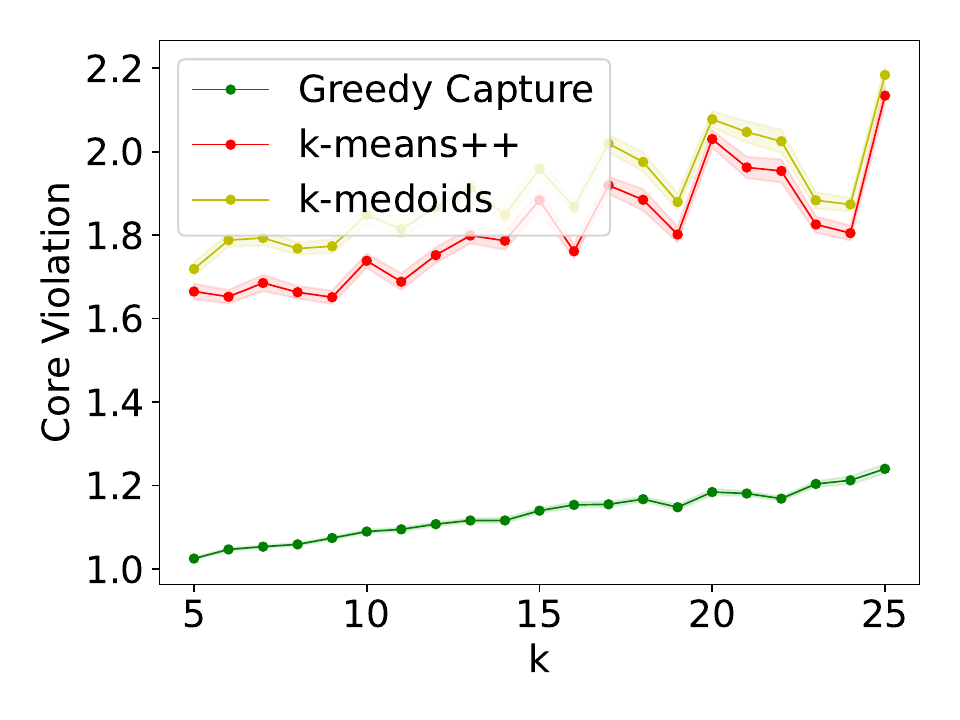}
    \end{subfigure}\hspace{0.02\textwidth}%
  \begin{subfigure}[b]{0.45\textwidth}
       \caption{FJR violation,  maximum loss}
        \includegraphics[width=\textwidth]{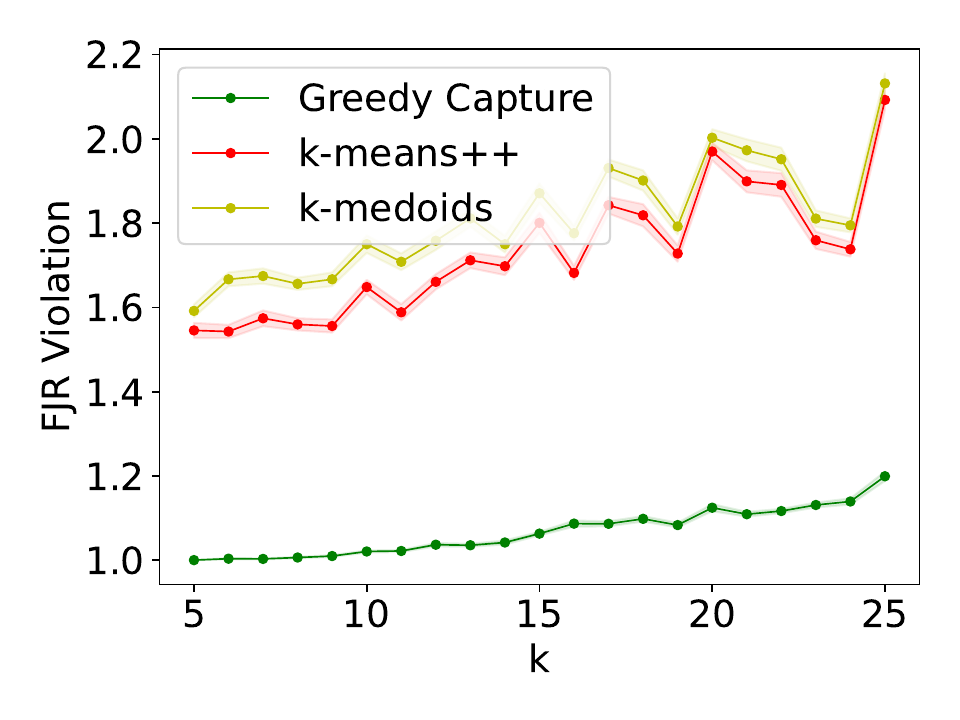}
    \end{subfigure}
    
    \begin{subfigure}[b]{0.31\textwidth}
     \caption{Avg within-cluster distance}
        \includegraphics[width=\textwidth]{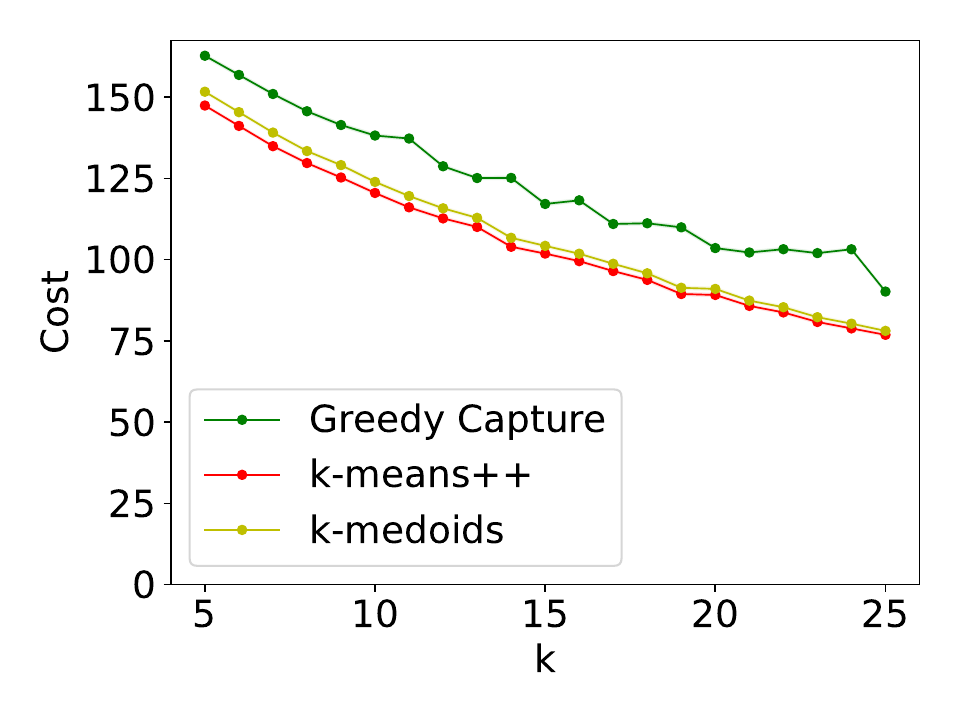}
    \end{subfigure}\hspace{0.02\textwidth}%
    \begin{subfigure}[b]{0.31\textwidth}
    \caption{$k$-means objective}
        \includegraphics[width=\textwidth]{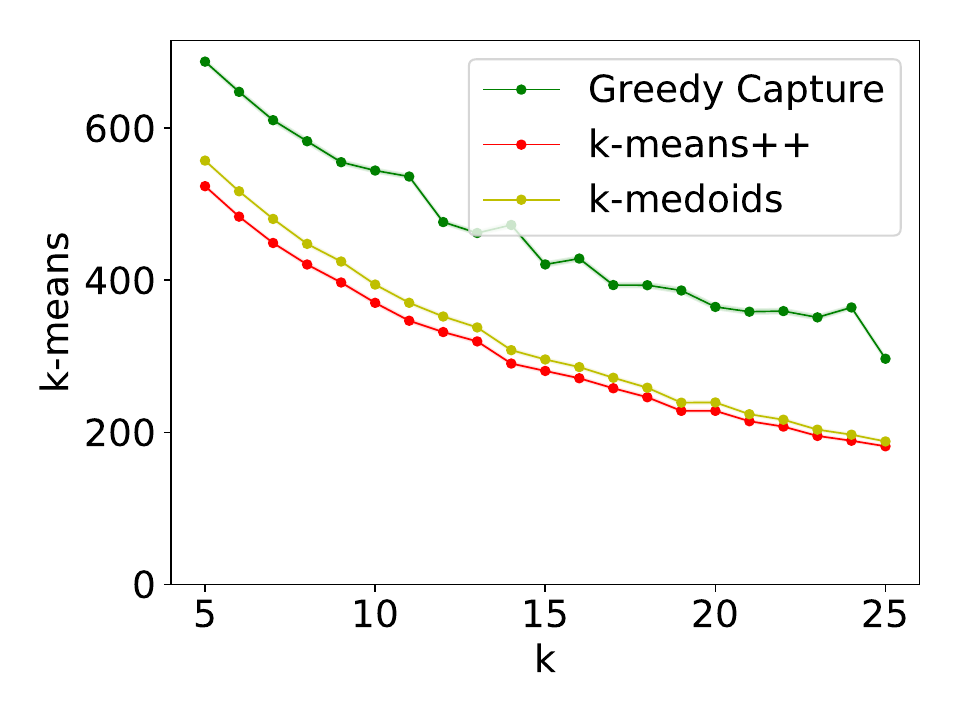}
    \end{subfigure}\hspace{0.02\textwidth}%
    \begin{subfigure}[b]{0.31\textwidth}
     \caption{$k$-medoids objective}
        \includegraphics[width=\textwidth]{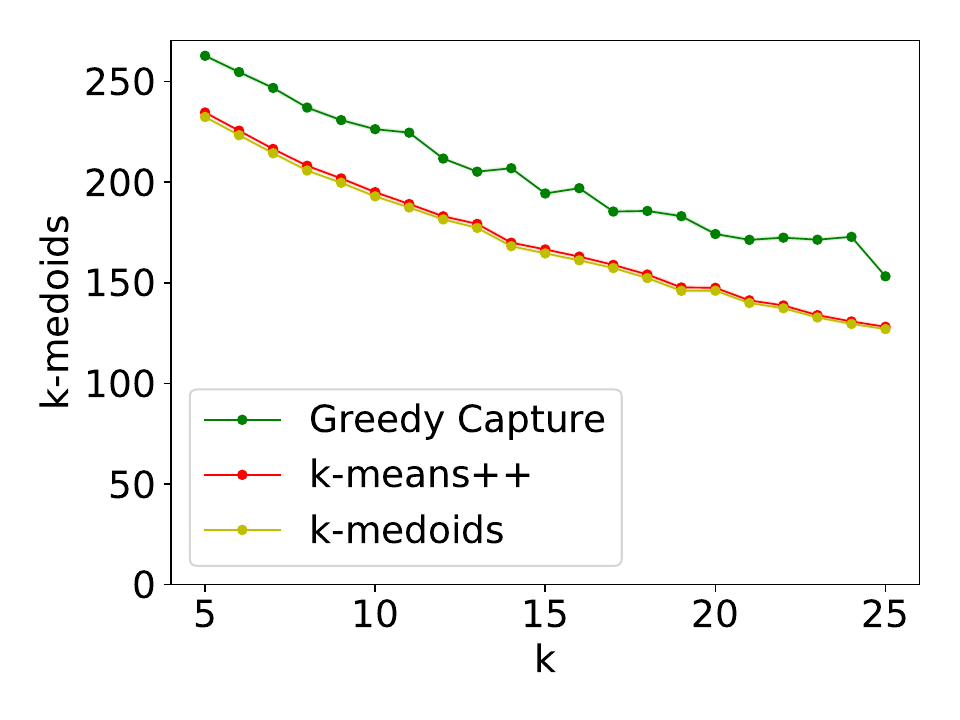}
    \end{subfigure}
   \caption{Diabetes dataset}
\end{figure}

\begin{figure}[htb!]
    \centering
    \begin{subfigure}[b]{0.45\textwidth}
    \caption{Core violation,  average loss}
        \includegraphics[width=\textwidth]{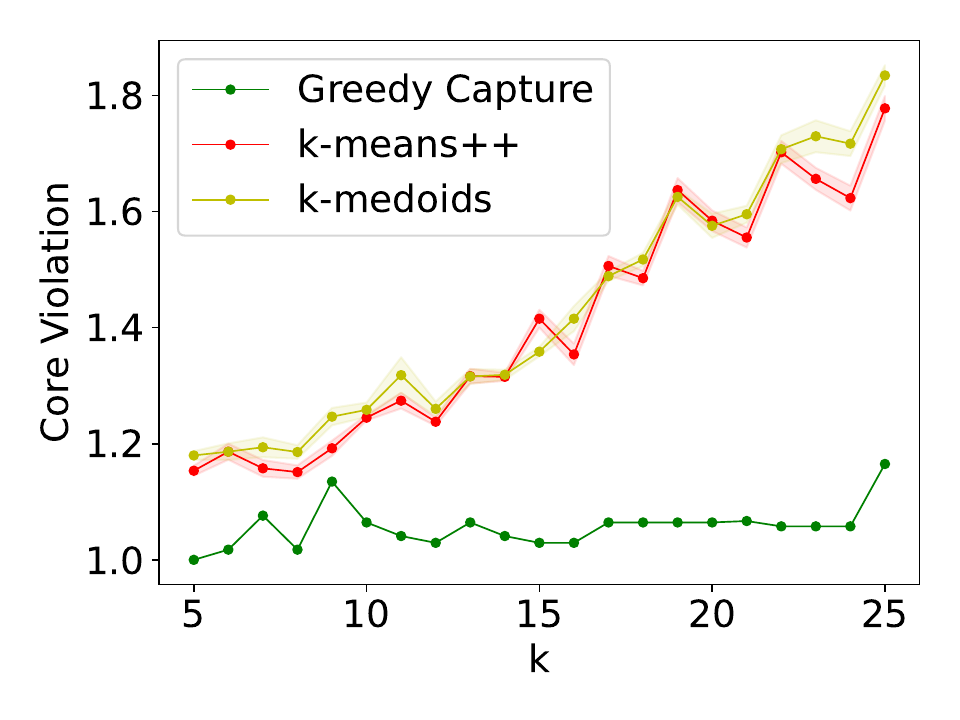}
    \end{subfigure}\hspace{0.02\textwidth}%
   \begin{subfigure}[b]{0.45\textwidth}
        \caption{FJR violation,  average loss}
        \includegraphics[width=\textwidth]{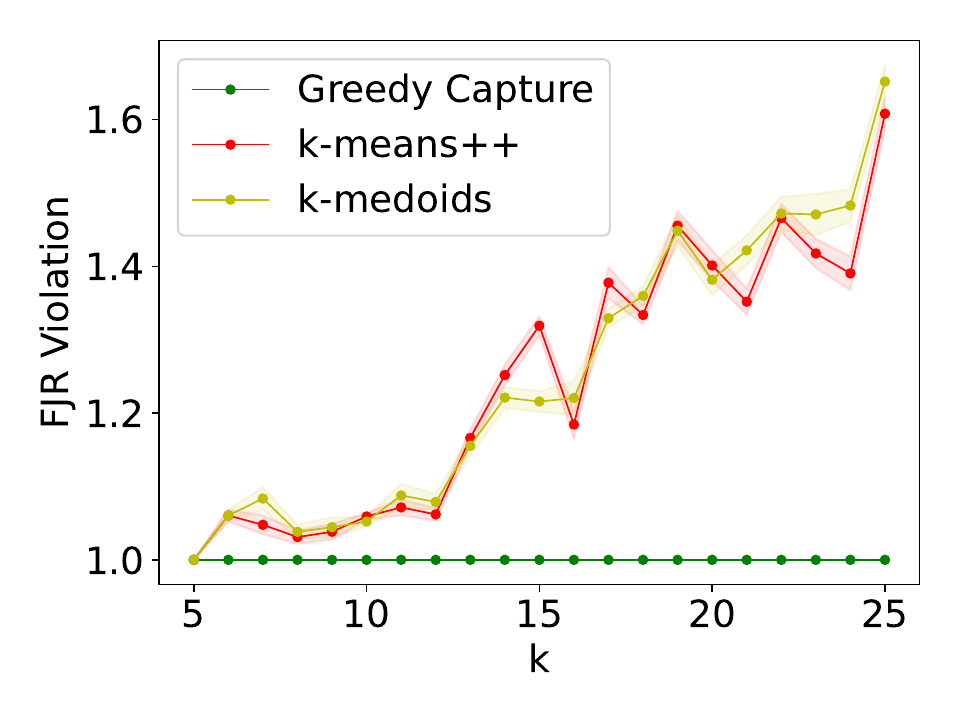}
    \end{subfigure}
    
    \begin{subfigure}[b]{0.45\textwidth}
        \caption{Core violation,  maximum loss}
        \includegraphics[width=\textwidth]{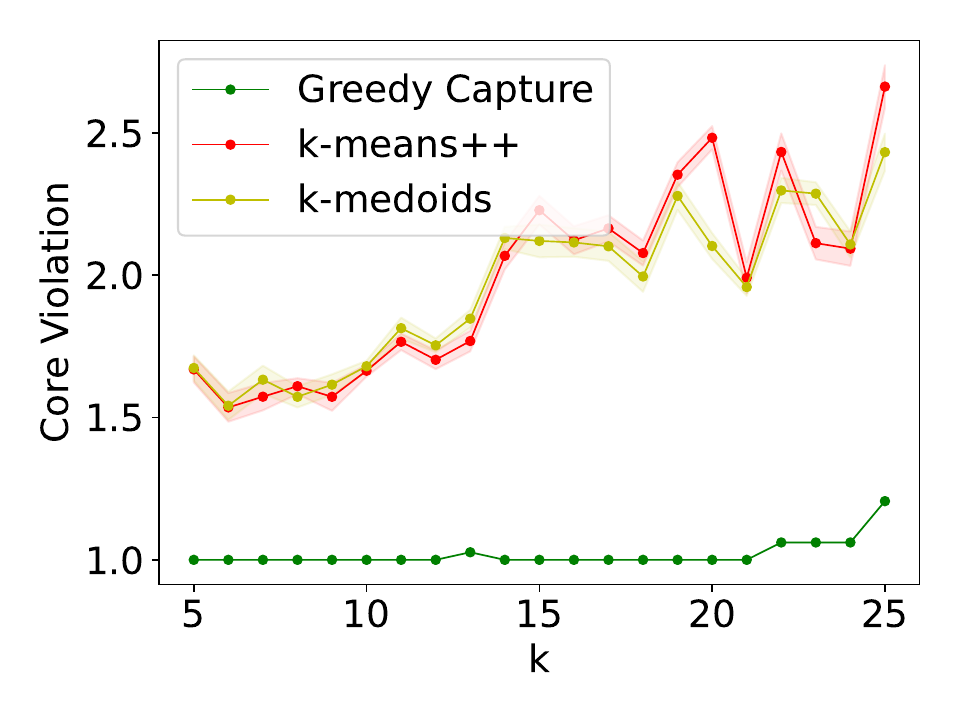}
    \end{subfigure}\hspace{0.02\textwidth}%
  \begin{subfigure}[b]{0.45\textwidth}
  \caption{FJR violation, maximum loss}
        \includegraphics[width=\textwidth]{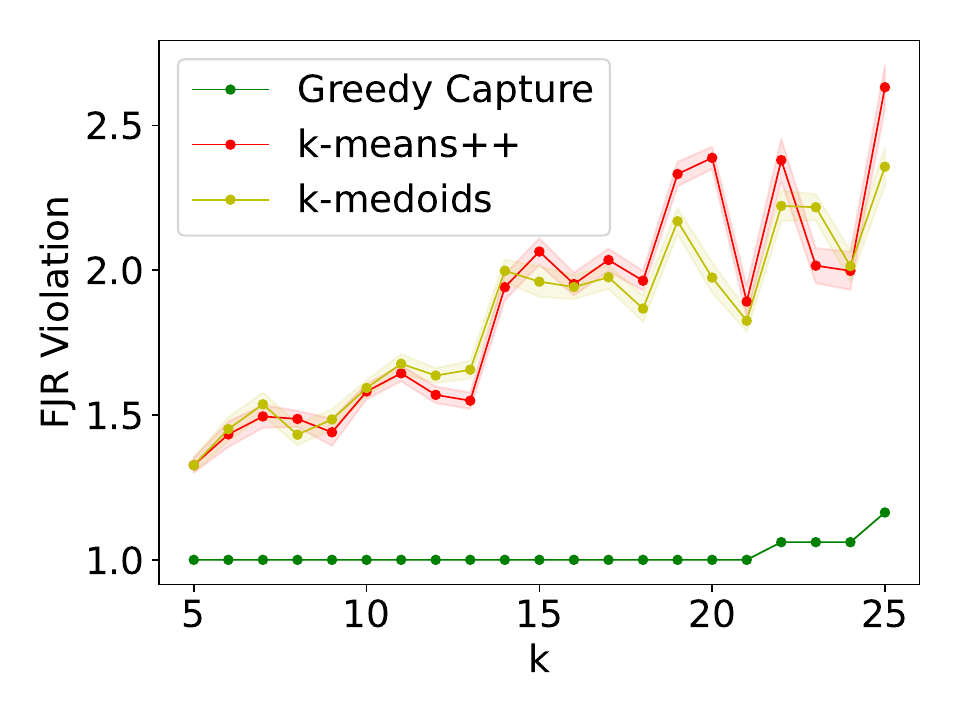}
    \end{subfigure}
    
    \begin{subfigure}[b]{0.31\textwidth}
       \caption{Avg within-cluster distance}
        \includegraphics[width=\textwidth]{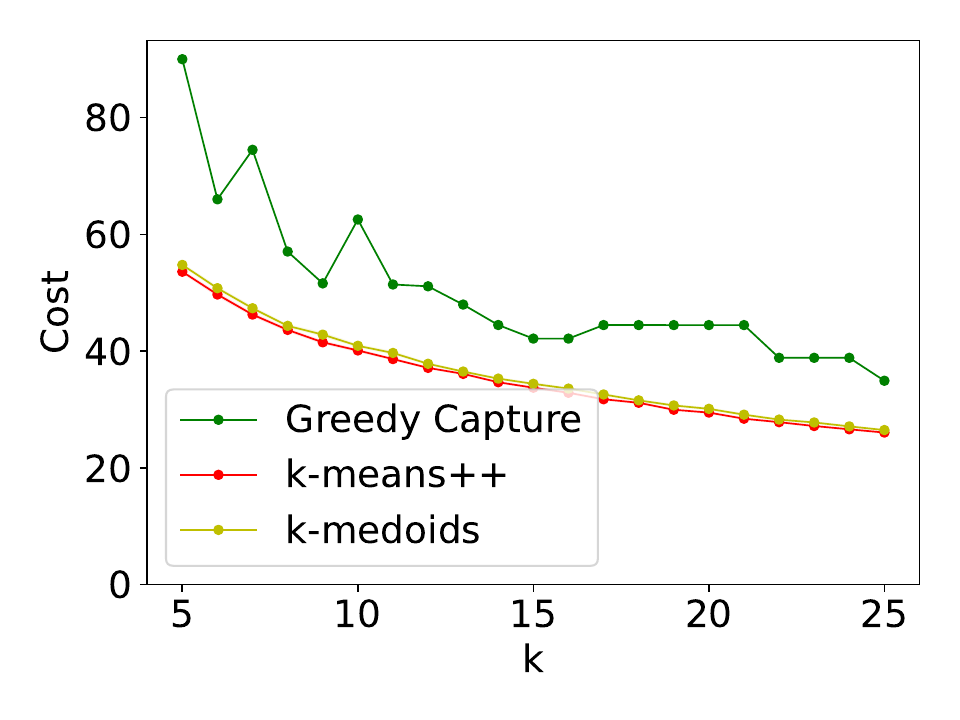}
    \end{subfigure}\hspace{0.02\textwidth}%
    \begin{subfigure}[b]{0.31\textwidth}
        \caption{$k$-means objective}
        \includegraphics[width=\textwidth]{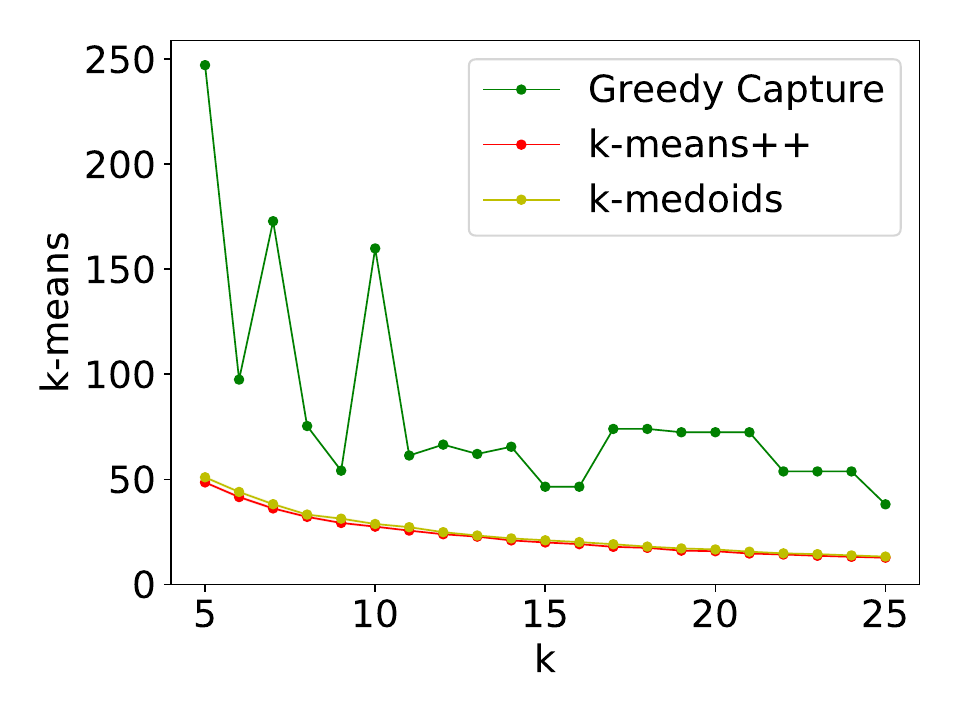}
    \end{subfigure}\hspace{0.02\textwidth}%
    \begin{subfigure}[b]{0.31\textwidth}
           \caption{$k$-medoids objective}
        \includegraphics[width=\textwidth]{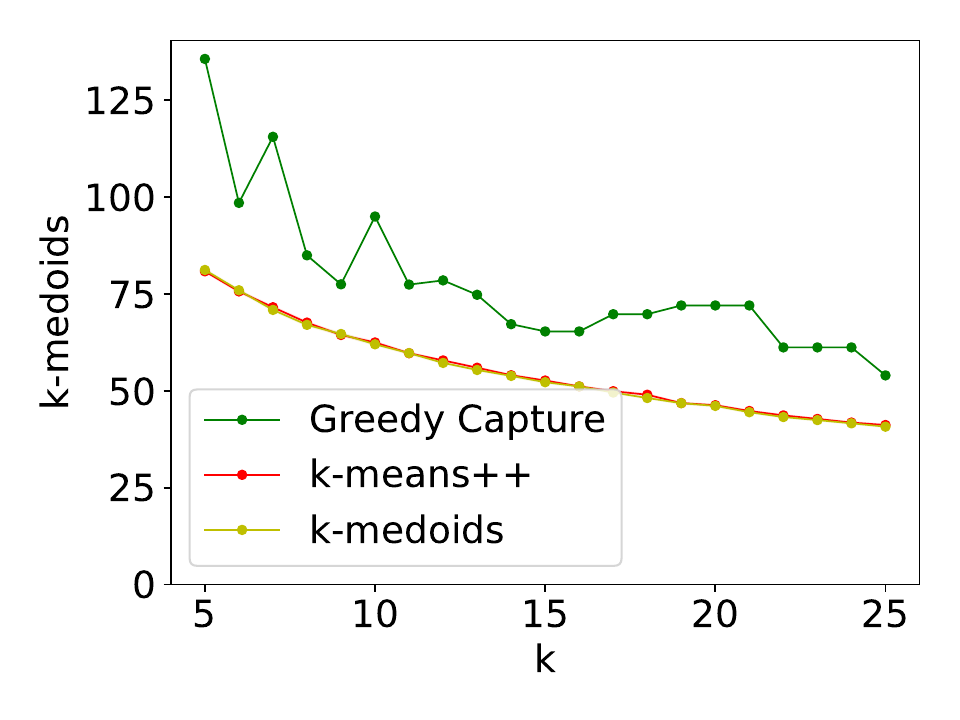}
    \end{subfigure}
  \caption{Iris dataset}
\end{figure}

\section{Incompatibility of FJR and Core with Classical Objectives }\label{app:class-obje}
Consider Example 1 from \citet{chen2019proportionally}. Classic algorithms such as $k$-center, $k$-means++, and $k$-median would   cluster all points at positions $a$ and $b$ together.  But if the points at $a$ deviate by forming a cluster, each of them improves from infinite loss to a finite loss. Therefore, these algorithms do not provide a finite approximation to the core or FJR. 

\end{document}